\newtheorem{assumption}{Assumption}
\newtheorem{remark}{Remark}
\newtheorem{theorem}{Theorem}
\newtheorem{proposition}{Proposition}
\newtheorem{lemma}{Lemma}
\title{Safe and Fast Tracking on a Robot Manipulator: Robust MPC and Neural Network Control} 
\author{Julian Nubert$^{1}$, Johannes K\"ohler$^2$, Vincent Berenz$^3$, Frank Allg\"ower$^2$, and Sebastian Trimpe$^4$
\thanks{Manuscript accepted January, 14, 2020.}
\thanks{This letter was recommended for publication by Associate Editor S. Briot and Editor P. Rocco upon evaluation of the reviewers' comments. This work was supported in part by the Max Planck Society, by the Cyber Valley Initiative, and by the German Research Foundation (grant GRK 2198/1).}
\thanks{$^{1}$Julian Nubert is with the Intelligent Control Systems Group, Max Planck Institute for Intelligent Systems, 70569 Stuttgart, Germany, and also M.Sc. student, ETH Z\"urich, 8092 Z\"urich, Switzerland (nubertj@ethz.ch).}
\thanks{$^{2}$Johannes K\"ohler and Frank Allg\"ower are with the Institute for Systems Theory and Automatic Control, University of Stuttgart, 70550 Stuttgart, Germany (\{johannes.koehler, frank.allgower\}@ist.uni-stuttgart.de).}
\thanks{$^{3}$Vincent Berenz is with the Autonomous Motion Department at the Max Planck Institute for Intelligent Systems, 72076 T\"ubingen, Germany (vberenz@tuebingen.mpg.de).}
\thanks{$^{4}$Sebastian Trimpe is with the Intelligent Control Systems Group, Max Planck Institute for Intelligent Systems, 70569 Stuttgart, Germany (trimpe@is.mpg.de).}
\thanks{Digital Object Identifier (DOI): 10.1109/LRA.2020.2975727.}
}
\newcommand{\mytitle}{\textbf{Accepted version.} To appear in \textit{IEEE Robotics and Automation Letters}.  DOI:
10.1109/LRA.2020.2975727\\
\copyright 2020 IEEE. Personal use of this material is permitted.
Permission from IEEE must be obtained for all other uses, in any current or future media, including reprinting/republishing this material for advertising or promotional purposes, creating new collective works, for resale or redistribution to servers or lists, or reuse of any copyrighted component of this work in other works.} 
\begin{document}

\maketitle

\thispagestyle{fancy}
 
\begin{abstract}
Fast feedback control and safety guarantees are essential in modern robotics.  We present an approach that achieves both by combining novel robust model predictive control (MPC) with function approximation via (deep) neural networks (NNs).  The result is a new approach for complex tasks with nonlinear, uncertain, and constrained dynamics as are common in robotics.  Specifically, we leverage recent results in MPC research to propose a new robust setpoint tracking MPC algorithm, which achieves reliable and safe tracking of a dynamic setpoint while guaranteeing stability and constraint satisfaction. 
The presented robust MPC scheme constitutes a one-layer approach that unifies the often separated planning and control layers, by directly computing the control command based on a reference and possibly obstacle positions.
As a separate contribution, we show how the computation time of the MPC can be drastically reduced by approximating the MPC law with a NN controller. The NN is trained and validated from offline samples of the MPC, yielding statistical guarantees, and used in lieu thereof at run time.
Our experiments on a state-of-the-art robot manipulator are the first to show that both the proposed robust and approximate MPC schemes scale to real-world robotic systems.
\end{abstract}

\begin{IEEEkeywords}
Deep Learning in Robotics and Automation; Motion Control; Optimization and Optimal Control; Redundant Robots; Robust/Adaptive Control of Robotic Systems
\end{IEEEkeywords}

\IEEEpeerreviewmaketitle


\section{Introduction}
\IEEEPARstart{T}{he} need to handle complexity becomes more prominent in modern control design, especially in robotics. 
First of all, 
complexity often stems from tasks or system descriptions that are high-dimensional and nonlinear. 
Second, not only classic control properties, e.g. step-response characteristics, are of interest, but also additional guarantees such as stability or satisfaction of hard constraints on inputs and states. In particular, the ability to \emph{robustly} guarantee safety becomes absolutely essential when humans are involved within the process, such as for automated driving or human-robot interaction. 
Finally, many robotic systems and tasks require fast acting controllers in the range of milliseconds, which is exacerbated by the need to run algorithms on resource-limited hardware.

Designing controllers for such challenging applications often involves the combination of several different conceptual layers. For example, classical robot manipulator control involves trajectory planning in the task space, solving for the inverse kinematics of a single point (i.e., the setpoint) or multiple points (task space trajectory), and the determination of required control commands in the state space~\cite{Siciliano2008}. These approaches can be affected by corner cases  of one of the components; for example, solving for the inverse kinematics may not be trivial for redundant robots.
For many complex scenarios, a direct approach is hence desirable for tracking of (potentially unreachable) reference setpoints in task space.

\begin{figure}[t]
	\centering
	\includegraphics[width=1.0\linewidth]{./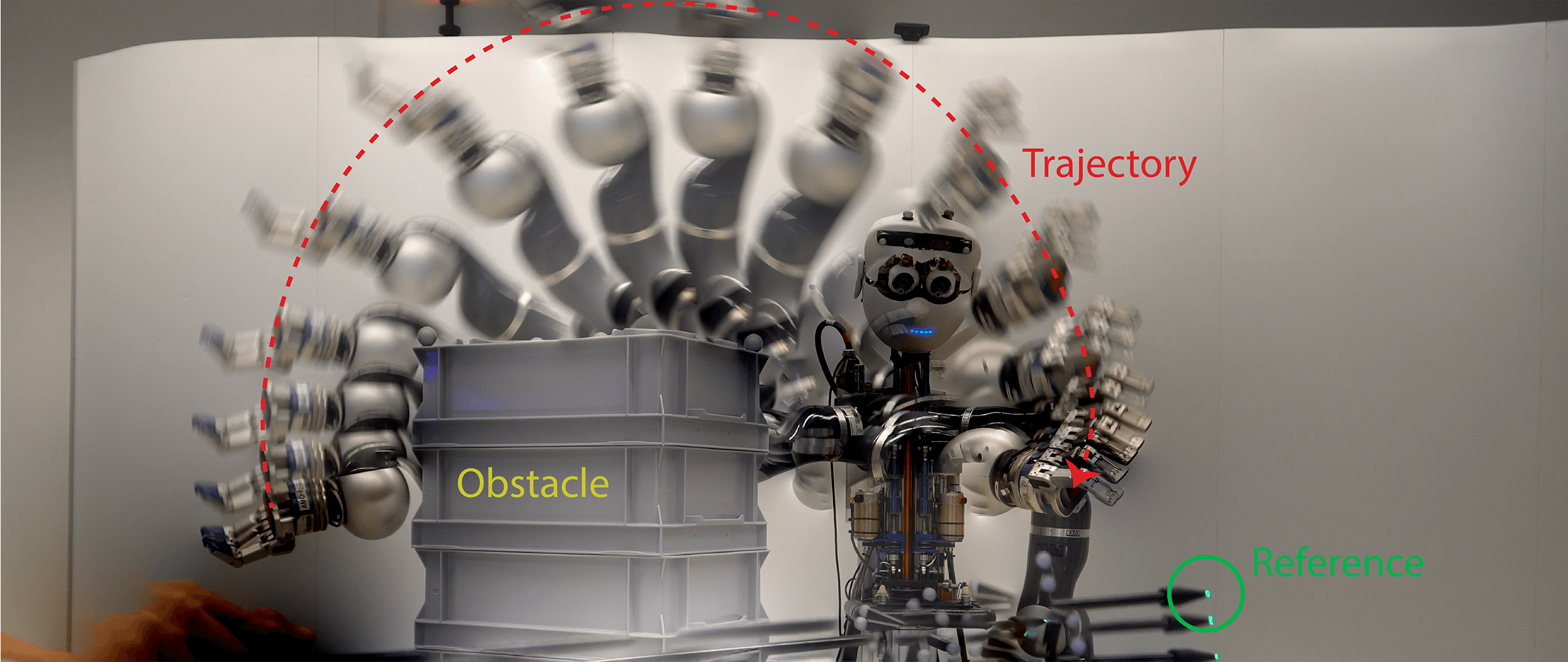}
	\caption{Apollo robot~\cite[Sec.~VI]{2017_rss_system} with two LBR4+ arms (at MPI-IS T\"ubingen). The end effector tracks the reference encircled in green, while guaranteeing stability and constraint satisfaction at all times (e.g., avoiding obstacles).}
	\label{fig:robot_experiment}
\end{figure}

In this letter, we propose a single-layer approach for robot tracking control that handles all aforementioned challenges.  We achieve this by combining (and extending) recent robust model predictive control (RMPC) and function approximation via supervised learning with (deep) neural networks (NNs).  The proposed RMPC can handle nonlinear systems, constraints, and uncertainty. 
In order to overcome the  computational complexity inherent in the online MPC optimization, we present a solution that approximates the RMPC with supervised learning yielding a NN as an explicit control law and a speed improvement by two orders of magnitude.
Through experiments on a KUKA LBR4+ robotic manipulator (see Figure~\ref{fig:robot_experiment}), we demonstrate -- for the first time -- the feasibility of both the novel robust MPC and its NN approximation for robot control.

\subsubsection*{Related Work}
MPC can handle nonlinear constraints and is applicable to nonlinear systems \cite{rawlings2009model}, however, disturbances or uncertainty can compromise the safety guarantees of nominal MPC schemes. RMPC overcomes this by preserving safety and stability despite disturbances and uncertainty.

Tube-based RMPC schemes ensure constraint satisfaction by predicting a tube around the nominal (predicted) trajectory that confines the actual (uncertain) system trajectory.
In~\cite{villanueva2017robust}, an approach based on min-max differential inequalities is presented to achieve robustness for the nonlinear case. 
In this work, we build upon the novel \emph{nonlinear} constraint tightening approach in~\cite{KoehlerCompEff18}, which provides slightly more conservative results than the approach in~\cite{villanueva2017robust}, but is far more computationally efficient.

We herein extend \cite{KoehlerCompEff18} to setpoint tracking. 
Setpoint tracking MPC, as introduced in~\cite{LIMON18} for nonlinear systems, enables the controller to track piece-wise constant output reference signals. 
A robust version for linear systems is presented in~\cite{limon2010robust}. 
To obtain a robust version for nonlinear systems, we optimize the size of the terminal set around the artificial steady state online, similar as done in~\cite{kohler19nonlinear} for nominal MPC.
None of the aforementioned robust or setpoint tracking MPC approaches has been applied before on a real-world, safety-critical system of complexity similar to the robot arm herein. 

Approximate MPC (AMPC) allows for running high performance control on relatively cheap hardware by using supervised learning (e.g. NNs) to approximate the implicit solution of the optimization problem.
Recently, in~\cite{DeepMPCChen18,zhang2019near,karg18},~\cite{chen2019large} theoretical approaches for AMPC for linear systems were presented, which use projection/active set iterations for feasibility~\cite{DeepMPCChen18,chen2019large}, statistical validation~\cite{karg18}, and duality for performance bounds~\cite{zhang2019near}. 
Herein, we leverage the AMPC approach for \emph{nonlinear} systems, recently proposed in~\cite{hertneck18}, which yields a NN control law that inherits the MPC's guarantees (in a statistical sense) through robust design and statistical validation. 

MPC control for robotic manipulators is investigated, for example, in~\cite{Faulwasser17,carron2019data}.
However, both of these approaches assume a trajectory in the joint space to be given beforehand. In~\cite{PredInvKin19}, reference tracking in the task space by using task scaling to solve for the inverse kinematics of a redundant manipulator is proposed, taking kinematic limits into account. In none of these approaches, safety guarantees or robustness under uncertainty are considered. 
Approaches making use of robust MPC schemes are not widely used in robotics (yet),
but tube and funnel approaches have  recently been explored for robust robot motion planning~\cite{majumdar2017funnel,singh2017robust,fridovich2018planning}.
However, to the best of our knowledge, no experimental implementation of an MPC design with theoretically guaranteed robustness exists yet for a robotic system. 

\subsubsection*{Contributions}
This letter makes contributions in three main directions: \emph{(i)} robust setpoint tracking MPC, \emph{(ii)} approximate MPC via supervised learning with NNs, and \emph{(iii)} their application to real robotic systems. 
\emph{(i)} We present a new RMPC setpoint tracking approach that combines the RMPC~\cite{KoehlerCompEff18} with the MPC setpoint tracking in~\cite{LIMON18} by proposing online optimized terminal ingredients to improve performance subject to safety constraints. 
The resulting robust approach provides safety guarantees in face of disturbances and uncertainties while  yielding fully integrated robot control in one-layer (i.e., robust motion planning and feedback control). 
\emph{(ii)} The presented AMPC builds and improves upon the approach in~\cite{hertneck18} by providing a novel, less conservative validation criterion that also considers model mismatch, which is crucial for robot experiments.
The proposed AMPC considerably improves performance due to fast NN evaluation, while providing statistical guarantees on safety. 
\emph{(iii)} Finally, this work comprises the first experimental implementations of both, the RMPC based on~\cite{KoehlerCompEff18} and the AMPC originating from~\cite{hertneck18}.
To the best of our knowledge, this is the first experimental implementation of nonlinear tracking RMPC with safety properties theoretically guaranteed by design.


\section{Problem Formulation}
\label{sec:general_approach}
We consider disturbed nonlinear continuous-time systems
\begin{equation}
\label{equ:system_ct}
    \dot{x}_t=f_{\mathrm{c}}(x_t,u_t)+d_{\mathrm{w},\mathrm{c},t},~y_t=o(x_t,u_t),
\end{equation}
with state $x_t \in \mathbb{R}^n$, control input $u_t \in \mathbb{R}^m$, output $y_t \in \mathbb{R}^q$, nominal dynamics $f_\mathrm{c}$ and model mismatch $d_{\mathrm{w},\mathrm{c},t} \in \mathcal{W}(x_t,u_t)$ with some known compact set $\mathcal{W}$. For the nonlinear state, input and output constraint set $\mathcal{Z}$, we consider
\begin{equation}
\label{equ:constraint_formulation}
    \mathcal{Z} = \{(x,u) \in \mathbb{R}^{n+m} | \bar{g}_j(x,u,o(x,u)) \leq 0,~j = 1,\dots,p\}. \nonumber
\end{equation}
In the following, we denote $g_j(x,u):=\bar{g}_j(x,u,o(x,u))$ and omit the time index $t$ when clear from context.
 
\subsubsection*{Objective}
Given an output reference $y^{\mathrm{d}}_t$, the control goal is to exponentially stabilize the optimal reachable setpoint, while ensuring robust constraint satisfaction, i.e. $(x_t,u_t) \in \mathcal{Z}~\forall t\geq 0$. 
This should hold irrespective of the reference, and even for a non-reachable output reference $y^\mathrm{d}$. To meet requirements of modern robotics, the controller should operate at fast update rates, e.g., ideally at the order of milliseconds. 

Such control problems are ubiquitous in robotics and other areas and combine the challenges of safe and fast tracking for complex (i.e., nonlinear, uncertain, constrained) systems.


\section{Methods: RMPC Setpoint Tracking \& AMPC}
\label{sec:rmpc_design}

In this section, we introduce the RMPC scheme based on~\cite{KoehlerCompEff18} (Sec.~\ref{sec:Method_RMPC}) and extend it to robust output tracking (Sec.~\ref{sec:Method_Tracking}). Following this, we show how the online control can be accelerated by moving the optimization offline using AMPC (Sec.~\ref{sec:Method_AMPC}) as an extension to the approach in~\cite{hertneck18}.

\subsection{Robust MPC Design}
\label{sec:Method_RMPC}
To ensure fast feedback, the piece-wise constant MPC control input $\pi_{\text{MPC}}$ is combined with a continuous-time control law $\kappa(x)$, i.e. the closed-loop input is given by
\begin{equation}
\label{equ:controlled_system}
    u_t=\pi_{\text{MPC}}\left(x_{t_k},y^{\mathrm{d}}_{t_k}\right) + \kappa(x_t),~\forall t \in [t_k, t_k+h],
\end{equation}
where $h$ denotes the sampling time of the RMPC, $t_k=k h$ the sampling instance, and $\pi_{\text{MPC}}$ the piece-wise constant MPC control law.
Denote $f_{\mathrm{c},\kappa}(x,v)=f_{c}(x,v+\kappa(x))$, $g_{j,\kappa}(x,v)=g_j(x,v+\kappa(x))$, $o_\kappa(x,v)=o(x,v+\kappa(x))$ and $\mathcal{W}_\kappa(x,v)=\mathcal{W}(x,v+\kappa(x))$.

\subsubsection{Incremental Stability}
For the design of the RMPC, we assume that the feedback $\kappa$ ensures incremental exponential stability, similar to \cite[Ass.~9]{KoehlerCompEff18}.
\begin{assumption}
\label{ass:loc_inc_stab}
There exists an incremental Lyapunov function $V_{\delta}:\mathbb{R}^n\times\mathbb{R}^n\rightarrow\mathbb{R}_{\geq 0}$ and constants $c_{\delta,\mathrm{l}},c_{\delta,\mathrm{u}},c_j,\rho_{\mathrm{c}} > 0$ s.t. the following properties hold $\forall(z,v+\kappa(z))\in\mathcal{Z}$, $x\in\mathbb{R}^n$:
\begin{subequations}
\begin{align}
\label{equ:inc_stab_equ_1}
& c_{\delta,\mathrm{l}} ||x-z||^2 \leq V_{\delta}(x,z) \leq c_{\delta,\mathrm{u}} ||x-z||^2, \\
\label{equ:inc_stab_equ_2}
& g_{j,\kappa}(x,v)-g_{j,\kappa}(z,v) \leq c_j \sqrt{V_{\delta}(x,z)}, \\
\label{equ:inc_stab_equ_3}
& \frac{d}{dt}V_{\delta}(x,z) \leq -2\rho_{\mathrm{c}} V_{\delta}(x,z), 
\end{align}
\end{subequations} 
with $\dot{x} = f_{\mathrm{c},\kappa}(x,v)$, $\dot{z} = f_{\mathrm{c},\kappa}(z,v)$.
Furthermore, the following norm-like inequality holds $\forall x_1,x_2,x_3\in\mathbb{R}^n$:
\begin{equation}
\label{equ:norm_like_cond}
\sqrt{V_{\delta}(x_1,x_2)}+\sqrt{V_{\delta}(x_2,x_3)}\geq \sqrt{V_{\delta}(x_1,x_3)}.
\end{equation}
\end{assumption}
The first and third condition (\eqref{equ:inc_stab_equ_1},~\eqref{equ:inc_stab_equ_3}) formulate stability while the second is fulfilled for locally Lipschitz continuous $g_{j,\kappa}$.
Incremental stability is a rather general condition, among others allowing for the usage of standard polytopic and ellipsoidal Lyapunov functions $V_{\delta}$ (i.e. $V_{\delta}(x,z)=\|x-z\|_{P_\delta}^2$), which satisfy condition~\eqref{equ:norm_like_cond} due to the triangular inequality. Compare~\cite[Remark~1]{KoehlerCompEff18} for a general discussion.

\subsubsection{Tube}
In this work, we use $V_\delta$ to characterize the tube around the nominal trajectory according to the system dynamics $\dot{z} = f_{\mathrm{c},\kappa}(z,v)$.
The predicted tube is parameterized by $\mathbb{X}_{\tau|t}=\{x|~V_{\delta}(x,z_{\tau|t})\leq s_{\tau|t}^2\}$, where $z_{\tau|t}$ denotes the nominal prediction and the tube size $s_{\tau|t}\geq 0$ is a scalar. For the construction of the tube and hence, for the design of the RMPC controller, we use a characterization of the magnitude of occurring uncertainties.

\subsubsection{Disturbance Description}

To over-approximate the uncertainties arising from the model mismatch $d_{\mathrm{w},\mathrm{c},t} \in \mathcal{W}(x_t,u_t)$, we need a (possibly constant) function $\overline{w}_{\mathrm{c}}$.
Using~\ref{equ:inc_stab_equ_3} and  $\mathcal{W}(x,u)$, it is possible to construct $\overline{w}_{\mathrm{c}}$ satisfying
\begin{align}
\label{eq:V_delta_w_diff}
    & \frac{d}{dt}\sqrt{V_\delta(x,z)} + \rho_{\mathrm{c}}\sqrt{V_\delta(x,z)} \leq \bar{w}_{\mathrm{c}}(z,v,\sqrt{V_{\delta}(z,v)}),\\
    & \dot{x}=f_{\mathrm{c},\kappa}(x,v)+d_{\mathrm{w},\mathrm{c}},~\dot{z}=f_{c,\kappa}(z,v),
    \forall~ d_{\mathrm{w},\mathrm{c}}\in\mathcal{W}_\kappa(x,v).\nonumber
\end{align}
The state and input dependency of $\bar{w}_\mathrm{c}$ can e.g. represent larger uncertainty in case of high dynamic operation due to parametric uncertainty.  For simplicity, we only consider a positive constant $\overline{w}_{\mathrm{c}}>0$ in the following, for details regarding the general case see~\cite{KoehlerCompEff18}.

\subsubsection{Tube Dynamics and Design Quantities}
\label{sec:design_quantities}
By using inequality~\eqref{eq:V_delta_w_diff}, the tube propagation is given by $\dot{s}_t = -\rho_{\mathrm{c}} s_t+\overline{w}_{\mathrm{c}}$, yielding $s_t=\overline{w}_{\mathrm{c}}/\rho_{\mathrm{c}}(1-e^{-\rho_{\mathrm{c}} t})$.
To allow for an efficient online optimization, we consider the discrete-time system $x^+=f_{\mathrm{d},\mathrm{w},\kappa}(x,v)=f_{\mathrm{d},\kappa}(x,v)+d_{\mathrm{w},\mathrm{d}}$, where $f_{\mathrm{d},\kappa}$ is the discretization of $f_{\mathrm{c},\kappa}$ with sampling time $h$ and $d_{\mathrm{w},\mathrm{d}}\in\mathcal{W}_d(x,v)$ denoting the discrete-time model mismatch.
Given the sampling time $h$, the corresponding discrete-time tube size is given by $s_{k\cdot h}=\frac{1-\rho_{\mathrm{d}}^k}{1-\rho_{\mathrm{d}}}\overline{w}_\mathrm{d}$ with $\rho_\mathrm{d}=e^{-\rho_{\mathrm{c}} h}$, $\overline{w}_{\mathrm{d}}=s_h$.
The discrete-time model mismatch satisfies $\sqrt{V_{\delta}(f_{\mathrm{d},\kappa}(x,v)+d_{\mathrm{w},\mathrm{d}},f_{\mathrm{d},\kappa}(x,v))}\leq \overline{w}_{\mathrm{d}}$, $\forall d_{\mathrm{w},\mathrm{d}}\in\mathcal{W}_{\mathrm{d}}(x,v)$.
The contraction rate $\rho_\mathrm{d}$ defines the growing speed of the tube while $s_{k \cdot h}$ denotes the size of the tube around the nominal trajectory, which bounds the uncertainties.

\subsection{Robust Setpoint Tracking}
\label{sec:Method_Tracking}
A standard MPC design (c.f.~\cite{rawlings2009model}) minimizes the squared distance $\|x(k|t)-x^{\mathrm{d}}\|_Q^2 + \|u(k|t)-u^d\|_R^2$ to some desired setpoint $(x^\mathrm{d},u^\mathrm{d})$, which requires a feasible target reference in the state and input space.
For the considered problem of (robust) setpoint tracking of the output $y$ (the end effector position in Sec.~\ref{sec:experiment}), this would require a (usually unknown) mapping of the form
$x^\mathrm{d} = m_x(y^\mathrm{d}),~u^\mathrm{d} = m_u(y^\mathrm{d})$.
\begin{remark}
In our specific use case of controlling a robotic manipulator, $m_x$ corresponds to the inverse kinematics. For MPC-based robot control such mappings are used in~\cite{Faulwasser17,carron2019data}, which we particularly avoid within our work.
\end{remark}

The proposed approach is a combination of~\cite{KoehlerCompEff18} and~\cite{LIMON18} and hence, can be seen as an extension of~\cite{kohler19nonlinear} to the robust case.
The following optimization problem characterizes the proposed RMPC scheme for setpoint tracking and avoids the need of providing $m_x,m_u$:
\begin{subequations}
\label{equ:rmpc_opt_problem}
\begin{alignat}{2}
& V_N(x_{t},y^{\mathrm{d}}_{t}) & & = \!\min_{v(\cdot|t),x^{\mathrm{s}},v^{\mathrm{s}},\alpha} J_N(x_{t},y^{\mathrm{d}}_{t};v(\cdot|t),y^{\mathrm{s}},x^{\mathrm{s}},v^{\mathrm{s}}) \nonumber \\
& \text{subject to} & & x(0|t) = x_t, \nonumber \\
\label{equ:dyn_pred}
& & & x(k+1|t) = f_{\mathrm{d},\kappa}(x(k|t), v(k|t)), \\
\label{equ:tightened_const}
& & & g_{j,\kappa}(x(k|t),v(k|t)) + c_j \frac{1-\rho_\mathrm{d}^k}{1-\rho_\mathrm{d}}\overline{w}_\mathrm{d} \leq 0, \\
\label{equ:new_constraints}
& & & x^s = f_{\mathrm{d},\kappa}(x^\mathrm{s},v^\mathrm{s}), \quad y^\mathrm{s} = o_\kappa(x^\mathrm{s},v^\mathrm{s}), \\
\label{equ:constr_term_set_1}
& & & \frac{\overline{w}_\mathrm{d}}{1-\rho_{\mathrm{d}}} \leq \alpha \leq -\frac{g_{j,\kappa}(x^\mathrm{s},v^\mathrm{s})}{c_j}, \\
\label{equ:constr_term_set_2}
& & & x(N|t) \in \mathcal{X}_{\mathrm{f}}(x^\mathrm{s},\alpha), \\ 
& & & k = 0,...,N-1, \quad j = 1,...,p, \nonumber
\end{alignat}
\end{subequations}
with the objective function
\begin{equation}
\label{equ:objective_func}
\!\begin{aligned}
    & J_N(x_{t},y^\mathrm{d}_{t};v(\cdot|t),y^\mathrm{s},x^\mathrm{s},v^\mathrm{s}) \\
    := & \sum_{k=0}^{N-1} \left(||x(k|t)-x^{\mathrm{s}}||_Q^2 + ||v(k|t)-v^{\mathrm{s}}||_R^2\right) \\ 
    & + V_{\mathrm{f}}(x(N|t),x^{\mathrm{s}}) + ||y^{\mathrm{s}}-y^{\mathrm{d}}_t||_{Q_{\mathrm{o}}}^2,
\end{aligned}
\end{equation}
$Q,R,Q_{\mathrm{o}} \succ 0$.
The terminal set is given as
\begin{equation}
\label{equ:constr_term_set_3}
     \mathcal{X}_{\mathrm{f}}(x^{\mathrm{s}},\alpha) := \{x \in \mathbb{R}^n | \sqrt{V_\delta(x,x^{\mathrm{s}})} + \frac{1-\rho_{\mathrm{d}}^N}{1-\rho_{\mathrm{d}}}\overline{w}_{\mathrm{d}} \leq \alpha \}.
\end{equation}
The optimization problem~\eqref{equ:rmpc_opt_problem} is solved at time $t$ with the initial state $x_t$. 
The optimal input sequence is denoted by $v^*(\cdot|t)$ with the control law denoted as $\pi_{\text{MPC}}(x_t,y_t^{\mathrm{d}})=v^*(0|t)$.
The predictions along the horizon $N$ are done w.r.t. the nominal system description in~\eqref{equ:dyn_pred}.
Furthermore, the constraints in~\eqref{equ:tightened_const} are tightened with tube size $s_{k \cdot h}$. 
In the following, we explain the considered objective function $J_N$ in~\eqref{equ:objective_func} and the conditions for the terminal set $\mathcal{X}_{\mathrm{f}}$ in~\eqref{equ:constr_term_set_1},~\eqref{equ:constr_term_set_2} and~\eqref{equ:constr_term_set_3} for setpoint tracking in more detail.

\subsubsection{Objective Function}

To track the external output reference $y^{\mathrm{d}}$, we use the setpoint tracking formulation introduced by Limon et al.~\cite{LIMON18}. 
Additional decision variables $(x^{\mathrm{s}},v^{\mathrm{s}})$ are used to define an artificial steady-state \eqref{equ:new_constraints}. 
The first part of the objective function $J_N$ ensures that the MPC steers the system to the artificial steady-state, while the term $\|y^{\mathrm{s}}-y^{\mathrm{d}}_t\|_{Q_{\mathrm{o}}}^2$ ensures that the output $y^{\mathrm{s}}$ at the artificial steady-state $(x^{\mathrm{s}},v^{\mathrm{s}})$ tracks the desired output $y^{\mathrm{d}}$.
In Theorem~\ref{thm:main}, we prove exponential stability of the optimal (safely reachable) steady-state, as an extension of~\cite{LIMON18,kohler19nonlinear} to the robust setting.

\subsubsection{New Terminal Ingredients}
The main approach in MPC design for ensuring stability and recursive feasibility is to introduce terminal ingredients, i.e. a terminal cost $V_{\mathrm{f}}$ and a terminal set $\mathcal{X}_{\mathrm{f}}$. 
Determining the setpoint $(x^{\mathrm{s}},v^{\mathrm{s}})$ online and occurring disturbances, further complicate their design.

The proposed approach determines the terminal set size $\alpha$ online, using one additional scalar variable similar to~\cite{kohler19nonlinear}, which is less conservative than the design in~\cite{LIMON18}. 
Furthermore, by parametrizing the terminal set $\mathcal{X}_{\mathrm{f}}$ with the incremental Lyapunov function $V_{\delta}$, we can derive intuitive formulas that ensure robust recursive feasibility in terms of lower and upper bounds on $\alpha$~\eqref{equ:constr_term_set_1}. 
As a result, we improve and extend~\cite{LIMON18,kohler19nonlinear} to the case of nonlinear robust setpoint tracking.
The properties of the terminal ingredients are summarized in Proposition~\ref{thm:set}. 
\begin{proposition}
\label{thm:set}
The set of constraints~\eqref{equ:new_constraints},~\eqref{equ:constr_term_set_1} and~\eqref{equ:constr_term_set_2} together with~\eqref{equ:constr_term_set_3} and the terminal controller $k_{\mathrm{f}} = v^{\mathrm{s}}$, provide a terminal set that ensures the following sufficient properties for robust recursive feasibility 
(c.f.~\cite[Ass.~7]{KoehlerCompEff18}).
\begin{itemize}
    \item The terminal set constraint $x(N|t)\in\mathcal{X}_{\mathrm{f}}(x^{\mathrm{s}},\alpha)$ is robust recursively feasible for fixed values $x^{\mathrm{s}},v^{\mathrm{s}},\alpha$. 
    \item The tightened state and input constraints~\eqref{equ:tightened_const} are satisfied within the terminal region.
\end{itemize}
\end{proposition}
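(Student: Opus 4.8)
The plan is to verify the two bullet points separately, each of which will pin down one of the two inequalities in \eqref{equ:constr_term_set_1}. The common tool is a discrete-time contraction estimate: integrating \eqref{equ:inc_stab_equ_3} over one sampling interval of length $h$ under the constant terminal input gives $\sqrt{V_\delta(f_{\mathrm{d},\kappa}(x,v^{\mathrm{s}}),f_{\mathrm{d},\kappa}(z,v^{\mathrm{s}}))}\leq\rho_{\mathrm{d}}\sqrt{V_\delta(x,z)}$ with $\rho_{\mathrm{d}}=e^{-\rho_{\mathrm{c}}h}$, and since \eqref{equ:new_constraints} makes $(x^{\mathrm{s}},v^{\mathrm{s}})$ a nominal steady state ($f_{\mathrm{d},\kappa}(x^{\mathrm{s}},v^{\mathrm{s}})=x^{\mathrm{s}}$), the terminal controller $k_{\mathrm{f}}=v^{\mathrm{s}}$ yields the one-step contraction $\sqrt{V_\delta(f_{\mathrm{d},\kappa}(x,v^{\mathrm{s}}),x^{\mathrm{s}})}\leq\rho_{\mathrm{d}}\sqrt{V_\delta(x,x^{\mathrm{s}})}$ toward the artificial steady state. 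I will also repeatedly use the triangle-like inequality \eqref{equ:norm_like_cond}, the one-step disturbance bound $\overline{w}_{\mathrm{d}}$ from Sec.~\ref{sec:design_quantities}, and the algebraic identity $(1-\rho_{\mathrm{d}})\tfrac{1-\rho_{\mathrm{d}}^N}{1-\rho_{\mathrm{d}}}\overline{w}_{\mathrm{d}}=(1-\rho_{\mathrm{d}}^N)\overline{w}_{\mathrm{d}}$ relating the geometric tube sizes.

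For the first bullet (robust recursive feasibility of the terminal constraint) I would construct the standard shifted candidate at time $t+1$: reuse the tail $v^{*}(k+1|t)$ of the optimal input for $k=0,\dots,N-2$ and append $v^{\mathrm{s}}$ as the last move, keeping $x^{\mathrm{s}},v^{\mathrm{s}},\alpha$ fixed. The candidate nominal trajectory re-initializes at the measured $x_{t+1}$, which differs from the shifted nominal prediction $x^{*}(1|t)$ by at most $\overline{w}_{\mathrm{d}}$ in $\sqrt{V_\delta}$; propagating this mismatch forward under identical inputs via the contraction estimate yields $\sqrt{V_\delta(x(N-1|t+1),x^{*}(N|t))}\leq\rho_{\mathrm{d}}^{N-1}\overline{w}_{\mathrm{d}}$. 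Combining this with $x^{*}(N|t)\in\mathcal{X}_{\mathrm{f}}$ through \eqref{equ:norm_like_cond}, then applying the terminal controller once more and contracting, I obtain $\sqrt{V_\delta(x(N|t+1),x^{\mathrm{s}})}\leq\rho_{\mathrm{d}}^N\overline{w}_{\mathrm{d}}+\rho_{\mathrm{d}}(\alpha-\tfrac{1-\rho_{\mathrm{d}}^N}{1-\rho_{\mathrm{d}}}\overline{w}_{\mathrm{d}})$. Requiring the right-hand side to respect the terminal set \eqref{equ:constr_term_set_3}, i.e.\ to be $\leq\alpha-\tfrac{1-\rho_{\mathrm{d}}^N}{1-\rho_{\mathrm{d}}}\overline{w}_{\mathrm{d}}$, collapses (using the geometric identity above) exactly to the lower bound $\tfrac{\overline{w}_{\mathrm{d}}}{1-\rho_{\mathrm{d}}}\leq\alpha$ of \eqref{equ:constr_term_set_1}, which therefore certifies the claim.

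For the second bullet (tightened constraints within the terminal region) I would take any $x\in\mathcal{X}_{\mathrm{f}}(x^{\mathrm{s}},\alpha)$ with terminal input $v^{\mathrm{s}}$ and bound each constraint through the incremental Lipschitz property \eqref{equ:inc_stab_equ_2}: $g_{j,\kappa}(x,v^{\mathrm{s}})\leq g_{j,\kappa}(x^{\mathrm{s}},v^{\mathrm{s}})+c_j\sqrt{V_\delta(x,x^{\mathrm{s}})}$. Since \eqref{equ:constr_term_set_3} gives $\sqrt{V_\delta(x,x^{\mathrm{s}})}\leq\alpha-\tfrac{1-\rho_{\mathrm{d}}^N}{1-\rho_{\mathrm{d}}}\overline{w}_{\mathrm{d}}$, adding the step-$N$ tube tightening $c_j\tfrac{1-\rho_{\mathrm{d}}^N}{1-\rho_{\mathrm{d}}}\overline{w}_{\mathrm{d}}$ from \eqref{equ:tightened_const} cancels the tube term and leaves $g_{j,\kappa}(x^{\mathrm{s}},v^{\mathrm{s}})+c_j\alpha$, which is $\leq 0$ precisely by the upper bound $\alpha\leq-g_{j,\kappa}(x^{\mathrm{s}},v^{\mathrm{s}})/c_j$ of \eqref{equ:constr_term_set_1}. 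Hence the tightened constraints hold throughout the terminal region for every $j$.

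The main obstacle I anticipate is the bookkeeping in the first bullet: correctly tracking how the first-step model mismatch $\overline{w}_{\mathrm{d}}$ is injected when the nominal trajectory is re-initialized at the true state, how it contracts by $\rho_{\mathrm{d}}$ per step along the shifted horizon, and how the terminal step contributes one further contraction, so that the geometric tube identity makes the residual collapse exactly to $\overline{w}_{\mathrm{d}}/(1-\rho_{\mathrm{d}})\leq\alpha$. A secondary point to get right is matching the tube offset used in the terminal set \eqref{equ:constr_term_set_3} with the step-$N$ tightening in \eqref{equ:tightened_const}, which is what enables the clean cancellation in the second bullet; one should also note that the lower bound not exceeding the upper bound in \eqref{equ:constr_term_set_1} is an implicit feasibility requirement on the admissible artificial steady state.
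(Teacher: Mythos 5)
Your proposal is correct and follows essentially the same route as the paper's proof: the second bullet is verified by the identical chain combining \eqref{equ:inc_stab_equ_2}, \eqref{equ:constr_term_set_3} and the upper bound in \eqref{equ:constr_term_set_1}, and the first bullet reduces to the same inequality $\rho_{\mathrm{d}}^N\overline{w}_{\mathrm{d}}+\rho_{\mathrm{d}}\bigl(\alpha-\tfrac{1-\rho_{\mathrm{d}}^N}{1-\rho_{\mathrm{d}}}\overline{w}_{\mathrm{d}}\bigr)\leq\alpha-\tfrac{1-\rho_{\mathrm{d}}^N}{1-\rho_{\mathrm{d}}}\overline{w}_{\mathrm{d}}$, which collapses to the lower bound $\alpha\geq\overline{w}_{\mathrm{d}}/(1-\rho_{\mathrm{d}})$. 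The only difference is that you explicitly re-derive the candidate mismatch $\rho_{\mathrm{d}}^N\overline{w}_{\mathrm{d}}$ by propagating the one-step disturbance along the shifted horizon, whereas the paper imports that bound directly from \cite[Ass.~7]{KoehlerCompEff18}.
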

\begin{proof}
The candidate $\sqrt{V_{\delta}(\tilde{x}^+,x^+)}\leq \rho_{\mathrm{d}}^N\overline{w}_{\mathrm{d}}$ (c.f.~\cite[Ass.~7]{KoehlerCompEff18}) satisfies the terminal constraint~\eqref{equ:constr_term_set_2} by using
\begin{align*}
&\sqrt{V_{\delta}(\tilde{x}^+,x^{\mathrm{s}})}\stackrel{\eqref{equ:inc_stab_equ_3},\eqref{eq:V_delta_w_diff}}{\leq} \rho_{\mathrm{d}}\sqrt{V_{\delta}(x,x^{\mathrm{s}})}+\rho_{\mathrm{d}}^N\overline{w}_{\mathrm{d}}\\
\stackrel{\eqref{equ:constr_term_set_3}}{\leq}& \rho_{\mathrm{d}}\alpha-\rho_{\mathrm{d}}\frac{1-\rho_{\mathrm{d}}^N}{1-\rho_{\mathrm{d}}}\overline{w}_{\mathrm{d}}+\rho_{\mathrm{d}}^N\overline{w}_{\mathrm{d}}\stackrel{\eqref{equ:constr_term_set_1}}{\leq}  \alpha-\frac{1-\rho_{\mathrm{d}}^N}{1-\rho_{\mathrm{d}}}\overline{w}_{\mathrm{d}}.
\end{align*}
Satisfaction of the tightened constraints~\eqref{equ:tightened_const} inside the terminal set follows with
\begin{align*}
&g_{j,\kappa}(x,v^{\mathrm{s}})+c_j\frac{1-\rho_{\mathrm{d}}^N}{1-\rho_{\mathrm{d}}}\overline{w}_{\mathrm{d}}\\
\stackrel{\eqref{equ:inc_stab_equ_2}}{\leq}& g_{j,\kappa}(x^{\mathrm{s}},v^{\mathrm{s}})+c_j(\sqrt{V_{\delta}(x,x^{\mathrm{s}})}+\frac{1-\rho_{\mathrm{d}}^N}{1-\rho_{\mathrm{d}}}\overline{w}_{\mathrm{d}})\stackrel{\eqref{equ:constr_term_set_1},\eqref{equ:constr_term_set_3}}{\leq}0.\qedhere
\end{align*}
\end{proof}
In addition to the presented terminal set, we consider some Lipschitz continuous terminal cost $V_{\mathrm{f}}$, which satisfies the following conditions in the terminal set with some $c>0$
\begin{subequations}
\label{equ:terminal_cost}
\begin{align}
\label{equ:terminal_cost_1}
    & V_{\mathrm{f}}(f_{\mathrm{d},\kappa}(x,v^{\mathrm{s}}),x^{\mathrm{s}})-V_{\mathrm{f}}(x,x^{\mathrm{s}})\leq -\|x-x^{\mathrm{s}}\|_Q^2, \\
\label{equ:terminal_cost_2}
    & V_{\mathrm{f}}(x,x^{\mathrm{s}})\leq c\|x-x^{\mathrm{s}}\|^2.
\end{align}
\end{subequations}
For the computation of the terminal cost for nonlinear systems with varying setpoints, we refer to~\cite{LIMON18,kohler19nonlinear}.

\subsubsection{Offline/Online Computations}
The procedure for performing the offline calculations can be found in Algorithm~\ref{alg:rmpc_off_calc}. One approach to compute suitable functions $V_{\delta},\kappa,V_{\mathrm{f}}$ using a quasi-LPV parametrization and linear matrix inequalities (LMIs) is described in~\cite{KoehlerQINF18}. The subsequent online calculations can then be performed according to Algorithm~\ref{alg:rmpc_on_calc}.
\begin{algorithm} 
  \begin{algorithmic}[1]
    \State Determine a stabilizing feedback $\kappa$ and a corresponding incremental Lyapunov function $V_{\delta}$ (Ass.~\ref{ass:loc_inc_stab}).
    \State Compute constant $\overline{w}_{\mathrm{c}}$ satisfying~\eqref{eq:V_delta_w_diff}.
    \State Compute constants $ c_j$ satisfying~\eqref{equ:inc_stab_equ_2}.
    \State Define sampling time $h$, compute $\rho_{\mathrm{d}}$, $\overline{w}_{\mathrm{d}}$ (Sec.~\ref{sec:design_quantities}).
    \State Determine terminal cost $V_{\mathrm{f}}(x,x^{\mathrm{s}})$ satisfying~\eqref{equ:terminal_cost}.
  \end{algorithmic} 
  \caption{Offline calculations for RMPC design.}
  \label{alg:rmpc_off_calc}
\end{algorithm}
\begin{algorithm} 
  \begin{algorithmic}[1] 
    \State Solve the MPC problem from~\eqref{equ:rmpc_opt_problem}.
    \State Apply input $u_t=\pi_{\text{MPC}}(x_{t_k},y_{t_k}^{\mathrm{d}})+\kappa(x_t),t \in [t_k,t_k+h)$.
  \end{algorithmic} 
  \caption{Online calculations, executed at every time step $t_k$, $k\in\mathbb{N}$ during the sampling time interval $[t_k,t_k+h)$.
  }
  \label{alg:rmpc_on_calc}
\end{algorithm}

\subsubsection{Closed-Loop Properties}
In the following, we derive the closed-loop properties of the proposed scheme. 
The set of safely reachable steady-state outputs $y^{\mathrm{s}}$ is given by 
$\mathbb{Y}_{\mathrm{s}}:=\{y^s \in \mathbb{R}^q |~g_{j}(m_x(y_{\mathrm{s}}),m_u(y_{\mathrm{s}}))+c_j\overline{w}_{\mathrm{d}}/(1-\rho_{\mathrm{d}})\leq 0,~j=1,\dots,r\}$. The optimal (safely reachable) setpoint $y_{\mathrm{opt}}^{\mathrm{s}}$, is the minimizer to the steady-state optimization problem $\!\min_{y_{\mathrm{s}}\in\mathbb{Y}_{\mathrm{s}}}\|y^{\mathrm{s}}-y^{\mathrm{d}}\|_{Q_{\mathrm{o}}}^2$.

The following technical condition is sufficient to ensure convergence to the optimal steady-state, compare~\cite{LIMON18}, \cite{kohler19nonlinear}. 
\begin{assumption}
\label{ass:Limon}
There exist (typically unknown) unique functions $m_x,m_u$, that are Lipschitz continuous. Furthermore, the set of safe output references $\mathbb{Y}_{\mathrm{s}}$ is convex.
\end{assumption}
Consequently, save operation and stability convergence is guaranteed due to the following theorem.
\begin{theorem}
\label{thm:main}
Let Assumption~\ref{ass:loc_inc_stab} hold and suppose that Problem~\eqref{equ:rmpc_opt_problem} is feasible at $t=0$. 
Then Problem~\eqref{equ:rmpc_opt_problem} is recursively feasible and the posed constraints $\mathcal{Z}$ are satisfied for the resulting closed loop (Algorithm~\ref{alg:rmpc_on_calc}), i.e., the system operates safely.
Suppose further that Assumption~\ref{ass:Limon} holds and $y^{\mathrm{d}}$ is constant.
Then the optimal (safely reachable) setpoint $x^{\mathrm{s}}_{\mathrm{opt}}$ is practically exponentially stable for the closed-loop system and the output $y$ practically exponentially converges to $y_{\mathrm{opt}}^{\mathrm{s}}$. 
\end{theorem}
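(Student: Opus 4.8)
The plan is to split the statement into its two claims and prove them in sequence: first robust recursive feasibility together with constraint satisfaction, then practical exponential stability of the optimal reachable setpoint. For the first claim I would use the standard shifted-candidate construction of tube MPC. Assuming feasibility at $t_k$ with optimizers $v^*(\cdot|t_k)$, $x^{\mathrm{s}*}$, $v^{\mathrm{s}*}$, $\alpha^*$, I would build a candidate at $t_{k+1}$ by keeping the artificial steady state $(x^{\mathrm{s}*},v^{\mathrm{s}*},\alpha^*)$ fixed, shifting the input sequence as $v(k|t_{k+1})=v^*(k+1|t_k)$ for $k=0,\dots,N-2$, and appending the terminal controller $v(N-1|t_{k+1})=k_{\mathrm{f}}=v^{\mathrm{s}*}$.

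The key step is to propagate the tube. Since the realized state obeys $\sqrt{V_\delta(x_{t_{k+1}},x^*(1|t_k))}\le\overline{w}_{\mathrm{d}}$ by the disturbance description, the discrete contraction from \eqref{equ:inc_stab_equ_3} gives $\sqrt{V_\delta(\tilde x(k|t_{k+1}),x^*(k+1|t_k))}\le\rho_{\mathrm{d}}^k\overline{w}_{\mathrm{d}}$ for the candidate nominal trajectory. Feeding this into the Lipschitz-type bound \eqref{equ:inc_stab_equ_2} and using the tightening already satisfied at $t_k$, a short telescoping in the powers of $\rho_{\mathrm{d}}$ shows the tightened constraints \eqref{equ:tightened_const} hold for $k=0,\dots,N-2$; the terminal index $k=N-1$ together with the terminal set constraint \eqref{equ:constr_term_set_2} is exactly covered by Proposition~\ref{thm:set}. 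Constraint satisfaction $(x_t,u_t)\in\mathcal{Z}$ then follows because the realized trajectory lies in the tube $\mathbb{X}_{\tau|t}$ around the nominal one, so the tightening margin absorbs the true disturbance.

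For the second claim I would use the optimal value function $V_N(x_t,y^{\mathrm{d}})$ as a Lyapunov function. Lower and upper quadratic bounds on $V_N$ in terms of $\|x-x^{\mathrm{s}}_{\mathrm{opt}}\|^2$ follow from $Q,R,Q_{\mathrm{o}}\succ0$, the terminal-cost bound \eqref{equ:terminal_cost_2}, and Lipschitz continuity of $m_x,m_u$ (Assumption~\ref{ass:Limon}). Evaluating $J_N$ at the shifted candidate and using the terminal-cost decrease \eqref{equ:terminal_cost_1} to absorb the appended stage cost yields a nominal descent of $-\|x_{t_k}-x^{\mathrm{s}}\|_Q^2$; the mismatch between the realized and nominal states (again bounded by $\rho_{\mathrm{d}}^k\overline{w}_{\mathrm{d}}$ via the tube) enters the quadratic cost through a Lipschitz estimate and produces an additive residual that vanishes as $\overline{w}_{\mathrm{d}}\to0$. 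This delivers a practical-ISS inequality $V_N(x_{t_{k+1}})\le\gamma V_N(x_{t_k})+c(\overline{w}_{\mathrm{d}})$ with $\gamma<1$, hence practical exponential stability of the artificial steady state.

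The final and hardest ingredient is to show that the optimal artificial steady state coincides, in the limit, with the optimal reachable setpoint $y^{\mathrm{s}}_{\mathrm{opt}}$; here I would reuse the convexity argument of \cite{LIMON18,kohler19nonlinear}. If the minimizing $y^{\mathrm{s}}$ differed from $y^{\mathrm{s}}_{\mathrm{opt}}$, convexity of $\mathbb{Y}_{\mathrm{s}}$ (Assumption~\ref{ass:Limon}) lets me move $y^{\mathrm{s}}$ an $O(\epsilon)$ step toward $y^{\mathrm{s}}_{\mathrm{opt}}$, decreasing the offset term $\|y^{\mathrm{s}}-y^{\mathrm{d}}\|_{Q_{\mathrm{o}}}^2$ linearly in $\epsilon$ while, by Lipschitz continuity of $m_x,m_u$ and $V_{\mathrm{f}}$, increasing the transient cost only as $O(\epsilon^2)$, contradicting optimality of $V_N$. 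Combining this characterization with the descent inequality then yields practical exponential convergence of $x^{\mathrm{s}}_{\mathrm{opt}}$ and of the output $y$ to $y^{\mathrm{s}}_{\mathrm{opt}}$. I expect the main obstacle to be making the disturbance residual and the convexity perturbation compatible, i.e. carrying the $\overline{w}_{\mathrm{d}}$-dependent terms cleanly through the steady-state optimality argument so that only a practical (neighborhood), rather than asymptotic, conclusion is claimed, and bounding the quadratic cross terms uniformly over the compact operating region.
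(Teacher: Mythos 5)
Your proposal is correct and follows essentially the same strategy as the paper: the paper's proof is a terse, citation-based argument that delegates the shifted-candidate/tube-propagation step to the RMPC theory of \cite{KoehlerCompEff18} (with Proposition~\ref{thm:set} supplying the terminal ingredients) and the value-function Lyapunov descent plus the convexity perturbation of the artificial steady state to \cite{LIMON18,kohler19nonlinear}. You have simply reconstructed in detail the same two-part argument (recursive feasibility via the telescoping $\rho_{\mathrm{d}}^k\overline{w}_{\mathrm{d}}$ bound, then practical exponential stability via $V_N$ and the $O(\epsilon)$-vs-$O(\epsilon^2)$ steady-state shift) that the paper invokes by reference.
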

\begin{proof}
The safety properties of the proposed scheme are due to the RMPC theory in~\cite{KoehlerCompEff18}, using the known contraction rate $\rho_{\mathrm{d}}$ and the constant $\overline{w}_{\mathrm{d}}$ (bounding the uncertainty) to compute a safe constraint tightening in~\eqref{equ:tightened_const}.
Proposition~\ref{thm:set} ensures that the novel design of the terminal ingredients using~\eqref{equ:constr_term_set_1}, \eqref{equ:constr_term_set_2} and \eqref{equ:constr_term_set_3} also satisfies the conditions in~\cite[Ass.~7]{KoehlerCompEff18} for fixed values $x^{\mathrm{s}},v^{\mathrm{s}},\alpha$.
The stability/convergence properties of the considered formulation are based on the non-empty terminal set ($\alpha>0$) with corresponding terminal cost~\eqref{equ:terminal_cost} and convexity of $\mathbb{Y}_{\mathrm{s}}$ (Ass.~\ref{ass:Limon}), which allow for an incremental change in $y_{\mathrm{s}}$ towards the desired output $y^{\mathrm{d}}$, compare~\cite{LIMON18,kohler19nonlinear} for details. 
Thus, the Lyapunov arguments in~\cite{kohler19nonlinear} remain valid with a quadratically bounded Lyapunov function $V_t$ satisfying $V_{t+1}-V_t\leq -\gamma\|x_t-x_{\mathrm{opt}}^{\mathrm{s}}\|^2+\alpha_{\mathrm{w}}(\overline{w}_{\mathrm{d}})$ with a positive definite function $\alpha_{\mathrm{w}}$~\cite{KoehlerCompEff18}, bounding the effect of the model mismatch. 
This implies practical exponential stability of $x_{\mathrm{opt}}^{\mathrm{s}}$. 
\end{proof}
\begin{remark}
\textit{Practical} stability of $x^{\mathrm{s}}_{\mathrm{opt}}$ implies that the system converges to a neighborhood of the optimal setpoint $y^{\mathrm{s}}_{\mathrm{opt}}$.
\end{remark}
\begin{remark}
\label{rem:non_conv}
Convexity of $\mathbb{Y}_s$ and uniqueness of the functions $m_x$, $m_u$ (Ass.~\ref{ass:Limon}) are strong assumption for general nonlinear problems. In particular, for the considered redundant 7-DOF robotic manipulator (Sec.~\ref{sec:experiment}), the functions $m_x,m_u$ are not unique (kinematic redundancy) and the feasible steady-state manifold $\mathbb{Y}_{\mathrm{s}}$ is not convex (collision avoidance constraint).  
Nevertheless, the safety properties are not affected by Ass.~\ref{ass:Limon} and in the experimental implementation, the RMPC typically converges to some (not necessarily unique) steady-state.
\end{remark}

\subsection{Approximate MPC}
\label{sec:Method_AMPC}
In the following, we introduce the AMPC, which provides an explicit approximation $\pi_{\text{approx}}$ of the RMPC control law $\pi_{\text{MPC}}$, yielding a significant decrease in computational complexity. 
In particular, as demonstrated in the numerical study in~\cite[Sec.~9.4]{chen2019large}, approximate MPC without additional modifications will in general not satisfy the constraints. Consequently, the core idea of the presented AMPC approach is to compensate for inaccuracies of the approximation by introducing additional robustness within the RMPC design.
In the following, we present a solution to obtain statistical guarantees (Sec.~\ref{sec:statistical_guarantees}) for the application of the resulting AMPC. To that end, we introduce an improved validation criterion (Prop.~\ref{prop:ampc_stab}, Sec.~\ref{sec:val_criterion}) compared to the one in~\cite{hertneck18}, being more suitable for real world applications.

\subsubsection{Validation Criterion}
\label{sec:val_criterion}

The following proposition provides a sufficient condition for AMPC safety guarantees.
\begin{proposition}
\label{prop:ampc_stab}
Let Assumption~\ref{ass:loc_inc_stab} hold.
Suppose the model mismatch between the real and the nominal system satisfies
\begin{align}
\label{equ:ampc_stab_equ1}
& \sqrt{V_\delta(f_{\mathrm{d},\kappa}(x,v),f_{\mathrm{d},\kappa}(x,v)+d_{\mathrm{w},\mathrm{d}})} \leq \overline{w}_{\mathrm{d},\mathrm{model}}, 
\end{align}
$ \forall (x,v+\kappa(x))\in\mathcal{Z}$, $\forall d_{w,d}\in\mathcal{W}_{\mathrm{d}}(x,v)$. If $\pi_{\text{MPC}}$ is designed with some $\overline{w}_{\mathrm{d}}$ and the approximation $\pi_{\text{approx}}$ satisfies
\begin{equation}
\!\begin{aligned}
\label{equ:ass_within_prop_2}
& \sqrt{V_\delta(f_{\mathrm{d},\kappa}(x,\pi_{\text{approx}}(x)), f_{\mathrm{d},\kappa}(x,\pi_{\text{MPC}}(x))} \\
& \quad \leq \overline{w}_{\mathrm{d},\text{approx}}:=\overline{w}_{\mathrm{d}}-\overline{w}_{\mathrm{d},\mathrm{model}},
\end{aligned}
\end{equation}
for any state $x$ with \eqref{equ:rmpc_opt_problem} being feasible, 
then the AMPC ensures the same properties as the RMPC in Theorem~\ref{thm:main}.
\end{proposition}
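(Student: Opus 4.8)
The plan is to reinterpret the combined effect of model mismatch and approximation inaccuracy as a single \emph{effective} disturbance, and then show that this effective disturbance never exceeds the design value $\overline{w}_{\mathrm{d}}$ with which the RMPC was constructed. The point is that Theorem~\ref{thm:main} (via the RMPC theory in~\cite{KoehlerCompEff18}) already guarantees recursive feasibility, constraint satisfaction and practical exponential stability for \emph{every} disturbance realization whose one-step effect, measured in $\sqrt{V_\delta}$, stays below $\overline{w}_{\mathrm{d}}$. Hence it suffices to verify that replacing $\pi_{\text{MPC}}$ by $\pi_{\text{approx}}$ on the real plant keeps this one-step bound intact; no new Lyapunov estimate needs to be derived.

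First I would fix an arbitrary state $x$ at which~\eqref{equ:rmpc_opt_problem} is feasible and introduce the three relevant successors: the state actually reached by the real plant under the approximate law, $x_a := f_{\mathrm{d},\kappa}(x,\pi_{\text{approx}}(x)) + d_{\mathrm{w},\mathrm{d}}$; the nominal successor under the approximate law, $x_b := f_{\mathrm{d},\kappa}(x,\pi_{\text{approx}}(x))$; and the nominal successor under the exact MPC law, $x_c := f_{\mathrm{d},\kappa}(x,\pi_{\text{MPC}}(x))$. The quantity the RMPC tube must control is $\sqrt{V_\delta(x_a,x_c)}$, i.e., the discrepancy between the state actually measured and the nominal prediction on which the shifted recursive-feasibility candidate of~\cite{KoehlerCompEff18} is built.

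Next I would invoke the norm-like triangle inequality~\eqref{equ:norm_like_cond} to split $\sqrt{V_\delta(x_a,x_c)} \le \sqrt{V_\delta(x_a,x_b)} + \sqrt{V_\delta(x_b,x_c)}$, treating $\sqrt{V_\delta(\cdot,\cdot)}$ as the distance-like object it is for the considered (e.g.\ ellipsoidal) $V_\delta$, so the ordering of arguments in the two hypotheses is immaterial. The first term is the genuine model mismatch, bounded by $\overline{w}_{\mathrm{d},\mathrm{model}}$ through~\eqref{equ:ampc_stab_equ1}; the second is the approximation error, bounded by $\overline{w}_{\mathrm{d},\text{approx}}$ through~\eqref{equ:ass_within_prop_2}. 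The defining relation $\overline{w}_{\mathrm{d},\text{approx}} = \overline{w}_{\mathrm{d}} - \overline{w}_{\mathrm{d},\mathrm{model}}$ then yields $\sqrt{V_\delta(x_a,x_c)} \le \overline{w}_{\mathrm{d}}$. Defining $\tilde d := x_a - x_c$, this says exactly that the AMPC closed loop coincides with the RMPC closed loop driven by an admissible effective disturbance, so the tube, the tightened constraints~\eqref{equ:tightened_const} and the terminal ingredients of Proposition~\ref{thm:set} remain valid and every conclusion of Theorem~\ref{thm:main} transfers verbatim.

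I expect the main obstacle to be not the inequality chain but the bookkeeping that makes this per-step reinterpretation legitimate along the entire closed loop: one must argue inductively that feasibility is preserved at each encountered state, so that the exact law $\pi_{\text{MPC}}(x)$ — the reference against which the approximation error in~\eqref{equ:ass_within_prop_2} is measured — stays well defined, and that the hypothesis~\eqref{equ:ass_within_prop_2} is indeed posited precisely on this feasible set. Since Theorem~\ref{thm:main} packages recursive feasibility, constraint satisfaction and practical exponential stability uniformly over all admissible disturbances, once the bound $\overline{w}_{\mathrm{d}}$ is established at every reachable feasible state the desired properties of the AMPC follow immediately.
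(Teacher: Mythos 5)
Your proposal is correct and follows essentially the same route as the paper's own proof: the triangle-type inequality~\eqref{equ:norm_like_cond} applied to the three successors (real/approximate, nominal/approximate, nominal/exact) splits the total one-step deviation into model mismatch plus approximation error, each term is bounded by~\eqref{equ:ampc_stab_equ1} and~\eqref{equ:ass_within_prop_2} respectively, and their sum equals $\overline{w}_{\mathrm{d}}$, so Theorem~\ref{thm:main} applies verbatim. Your additional remark on the inductive bookkeeping needed to keep $\pi_{\text{MPC}}(x)$ well defined along the closed loop is a point the paper leaves implicit, but it does not change the argument.
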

\begin{proof}
We use the following bound on the perturbed AMPC:
\begin{align*}
& \sqrt{V_\delta(f_{\mathrm{d}}(x,\pi_{\text{approx}})+d_{\mathrm{w},\mathrm{d}}, f_\mathrm{d}(x,\pi_{\text{MPC}}))}\\
& \stackrel{\text{\eqref{equ:norm_like_cond}}}{\leq} \sqrt{V_\delta(f_{d}(x,\pi_{\text{approx}})+d_{w,d}, f_{d}(x,\pi_{\text{approx}}))} \\ 
& + \sqrt{V_\delta(f_{d}(x,\pi_{\text{approx}}), f_d(x,\pi_{\text{MPC}}))} \\
& \stackrel{\eqref{equ:ampc_stab_equ1},\eqref{equ:ass_within_prop_2}}{\leq} \bar{w}_{\mathrm{d},\mathrm{model}} + \bar{w}_{\mathrm{d},\text{approx}} \stackrel{\eqref{equ:ass_within_prop_2}}{=} \bar{w}_{\mathrm{d}}.
\end{align*}
Then, the properties follow from Theorem~\ref{thm:main}.\qedhere
\end{proof}

\subsubsection{Statistical Guarantees}
\label{sec:statistical_guarantees}
In practice, guaranteeing a specified error $\overline{w}_{\mathrm{d},\text{approx}}$ for all possible values $(x,y^{\mathrm{d}})$ with a supervised learning approach is difficult, especially for deep NNs.
However, it is possible to make statistical statements about $\pi_{\text{approx}}$ using Hoeffding's inequality~\cite{hoeffding63}.
For the statistical guarantees, we adopt the approach from~\cite{hertneck18} and use our improved validation criterion as introduced in Proposition~\ref{prop:ampc_stab}. 
\begin{assumption}
\label{ass:deterministic}
The prestabilized, disturbed system dynamics $f_{\mathrm{d},\mathrm{w},\kappa}$ characterize a deterministic (possibly unknown) map. 
\end{assumption}
We validate full trajectories under the AMPC with independent and identically distributed (i.i.d.) initial condition and setpoints. Due to Assumption~\ref{ass:deterministic}, also the trajectories themselves are i.i.d.. Specifically, we define a trajectory as
\begin{equation}
\label{equ:X_i}
\!\begin{aligned}
& X_i := \{ x(k), k \in \mathbb{N}: x_i(0)~\text{feasible at}~t=0, \\
& \qquad x(k+1) = f_{\mathrm{d},\mathrm{w},\kappa}(x(k),\pi_{\text{approx}}(x(k),y_i^d)) \}. 
\end{aligned}
\end{equation}
Further, we consider the indicator function based on~\eqref{equ:ass_within_prop_2}
\begin{equation}
\nonumber
\label{equ:hertneck_indic_func}
  I(X_i)= 
\begin{cases}
    1,& \text{if } \sqrt{V_\delta(f_{\mathrm{d},\kappa}(x,\pi_{\text{approx}}), f_{\mathrm{d},\kappa}(x,\pi_{\text{MPC}}))} \\
    & \quad \leq \overline{w}_{\mathrm{d},\text{approx}},~\forall x \in X_i \\
    0,              & \text{otherwise}.
\end{cases}
\end{equation}
The indicator measures, whether for any time step along the trajectory, there is a discrepancy larger than $\overline{w}_{\mathrm{d},\text{approx}}$ between the ideal trajectory with $\pi_{\text{MPC}}$ and the trajectory with the approximated input $\pi_{\text{approx}}$.
The empirical risk is given as $\tilde{\mu} = \frac{1}{b} \sum_{j=1}^{b} I(X_j)$ for $b$ sampled trajectories, while $\mu$ is denoting the true expected value of the random variable. With Hoeffding's inequality the following Lemma can be derived.
\begin{lemma}
\label{lem:stat_val}
\cite[Lemma~1]{hertneck18}
Suppose Assumption~\ref{ass:deterministic} holds. Then the condition $\mathbb{P}\left[I(X_i) = 1 \right] \geq \mu_{\mathrm{crit}} := \tilde{\mu} - \sqrt{-\ln(\delta_h/2)/(2b)}$, holds at least with confidence $1-\delta_h$.
\end{lemma}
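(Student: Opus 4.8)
The plan is to apply Hoeffding's inequality~\cite{hoeffding63} directly to the i.i.d.\ Bernoulli random variables $I(X_1),\dots,I(X_b)$. By Assumption~\ref{ass:deterministic}, the map $f_{\mathrm{d},\mathrm{w},\kappa}$ is deterministic, so each trajectory $X_i$ in~\eqref{equ:X_i} is a deterministic function of its i.i.d.\ initial condition and setpoint $(x_i(0),y_i^d)$; hence the $X_i$ are i.i.d., and consequently the indicators $I(X_i)\in\{0,1\}$ are i.i.d.\ as well, each with the common mean $\mu=\mathbb{E}[I(X_i)]=\mathbb{P}[I(X_i)=1]$. This is the hypothesis needed to invoke the concentration bound.

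First I would recall the one-sided form of Hoeffding's inequality for bounded random variables: for independent $Y_j\in[0,1]$ with empirical mean $\tilde\mu=\tfrac1b\sum_{j=1}^b Y_j$ and true mean $\mu$, one has $\mathbb{P}[\mu\le\tilde\mu-\epsilon]\le e^{-2b\epsilon^2}$ for any $\epsilon>0$. Since $I(X_j)$ takes values in $[0,1]$ (indeed in $\{0,1\}$), the range parameter is $1$, which yields exactly this exponent. The second step is to set the right-hand side equal to the target failure probability, i.e.\ choose $\epsilon$ so that $e^{-2b\epsilon^2}=\delta_h/2$, giving $\epsilon=\sqrt{-\ln(\delta_h/2)/(2b)}$. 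Substituting this $\epsilon$ produces $\mathbb{P}[\mu\le\tilde\mu-\epsilon]\le\delta_h/2\le\delta_h$, and taking the complementary event gives $\mathbb{P}[\mu>\tilde\mu-\epsilon]\ge1-\delta_h$, which is precisely the claim $\mathbb{P}[I(X_i)=1]=\mu\ge\mu_{\mathrm{crit}}:=\tilde\mu-\sqrt{-\ln(\delta_h/2)/(2b)}$ holding with confidence at least $1-\delta_h$.

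I do not anticipate a substantial analytical obstacle, since the result is a direct instantiation of a standard inequality; the statement is even cited as~\cite[Lemma~1]{hertneck18}, so the proof is essentially a verification that the present setup meets the inequality's hypotheses. The one point requiring genuine care is the justification of independence: the randomness enters only through the sampled $(x_i(0),y_i^d)$, and Assumption~\ref{ass:deterministic} is exactly what rules out any additional stochasticity in the closed-loop evolution that could correlate the indicators. I would therefore emphasize that step explicitly, and then note that the factor $\delta_h/2$ (rather than $\delta_h$) inside the logarithm is a conservative choice that leaves room for a matching two-sided or upper-tail statement elsewhere, while the displayed one-sided bound already follows from the single exponential $e^{-2b\epsilon^2}\le\delta_h$.
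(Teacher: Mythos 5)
Your proof is correct and follows exactly the route the paper intends: the paper states only that the lemma ``can be derived'' from Hoeffding's inequality and defers to \cite[Lemma~1]{hertneck18}, and your argument --- i.i.d.\ indicators via Assumption~\ref{ass:deterministic}, the one-sided tail bound $e^{-2b\epsilon^2}$, and solving $e^{-2b\epsilon^2}=\delta_h/2$ for $\epsilon$ --- is precisely that derivation. Your emphasis on why determinism of $f_{\mathrm{d},\mathrm{w},\kappa}$ is the hypothesis that makes the $I(X_i)$ i.i.d.\ is the right point to single out.
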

\begin{remark}
In practice, it is not possible to check for infinite length trajectories $X_i$. Since in our definition, the reference $y_i^{\mathrm{d}}$ is fixed along the whole trajectory, we do the validation until a steady state is reached below a certain threshold.
\end{remark}
We provide the following illustration: given a large enough number of successfully validated trajectories, we obtain a high empirical risk, e.g. $\tilde{\mu} \approx 99\%$. This result ensures that with confidence of e.g. $(1-\delta_h)\approx 99\%$,~\eqref{equ:ass_within_prop_2} holds at least with probability $\mu_{\mathrm{crit}}$ (e.g. $\mu_{\mathrm{crit}} \approx 98$\%) for a new trajectory with initial condition $(x,y^{\mathrm{d}})$.
Thus, with high probability, the guarantees in Proposition~\ref{prop:ampc_stab} (safety and stability) hold.

\subsubsection{Algorithm}
The overall procedure for the AMPC is summarized in Algorithm~\ref{alg:ampc}, based on Hertneck et al. in~\cite{hertneck18}.
\begin{algorithm} 
  \begin{algorithmic}[1] 
    \State Choose $\overline{w}_{\mathrm{d}}$, determine $\overline{w}_{\mathrm{d},\mathrm{model}}$ and calculate $\overline{w}_{\mathrm{d},\text{approx}}$. 
    \State Design the RMPC according to Algorithm~\ref{alg:rmpc_off_calc}.
    \State Learn $\pi_{\text{approx}} \approx \pi_{\text{MPC}}$. 
    \State Validate $\pi_{\text{approx}}$ according to Lemma~\ref{lem:stat_val}.
    \State If the validation fails, repeat the learning from step 3.
  \end{algorithmic} 
  \caption{Procedure for the AMPC.}
  \label{alg:ampc}
\end{algorithm}


\section{Robot Experiments}
\label{sec:experiment}
We demonstrate the proposed RMPC and AMPC approaches on a KUKA LBR4+ robotic manipulator (Fig.~\ref{fig:robot_experiment}).

\subsection{Robotic System}
Several works investigated the dynamics formulation of the KUKA LBR4+ and LBR iiwa robotic manipulators~\cite{Jubien14,Sturz17}, with dynamic equations of the form
$M(q)\ddot{q}+b(q,\dot{q}) = \tau$.
Here, $\tau$ denotes the applied torque and $q,\dot{q},\ddot{q}$ the joint angle, joint velocity and joint acceleration~\cite{Siciliano2008}.

\subsubsection{System Formulation}

In this work, we leverage existing low-level controllers as an inverse dynamics inner-loop feedback linearization ending up with a kinematic model that assumes direct control of joint accelerations, i.e., $\ddot{q}_t = u_t$.
Such a description is not uncommon for designing higher-level controllers in robotics, compare e.g. the MPC scheme in~\cite{carron2019data} based on a kinematic model.
As the control objective, we aim for tracking a given reference $y^d$ in the task space with the manipulator end effector position, defined as $y$. Since this position only depends on the first four joints, we  consider those for our control design.
The resulting nonlinear system with state $x_t= [ q_t,\dot{q}_t ]^\top$ is given by
\begin{align}
    \dot{x}_t=[\dot{q}_t^\top ,u_t^\top]^\top,~y_t=o(x_t,u_t),~x_t\in\mathbb{R}^8,~u_t\in\mathbb{R}^4.
\end{align}
The output $y=o(x,u)$ is given by the forward kinematic:
\begin{equation}
\label{equ:apollo_endeff_pos_2}
\begin{gathered}
y = o(x,u) = \\
\begin{pmatrix}
  \scriptstyle -C_1c_{q_1}s_{q_2} - C_2s_{q_4}(s_{q_1}s_{q_3} -c_{q_1}c_{q_2}c_{q_3}) - C_2c_{q_1}c_{q_4}s_{q_2} \\
  \scriptstyle C_2s_{q_4}(c_{q_1}s_{q_3}+c_{q_2}c_{q_3}s_{q_1}) -C_1s_{q_1}s_{q_2} - C_2c_{q_4}s_{q_1}s_{q_2} \\
  \scriptstyle C_1c_{q_2} + C_2c_{q_2}c_{q_4} +C_2c_{q_3}s_{q_2}s_{q_4} + C_3 
\end{pmatrix},
\end{gathered}
\end{equation}
where $s_{q_i}$ and $c_{q_i}$ denote the sine and cosine of $q_i$, respectively, and $C_1=0.4, C_2=0.578, C_3=0.31$.

\subsubsection{Constraints}
States and inputs are subject to the following polytopic constraints: joint angles $q_i$ can turn less than $\pm180^\circ$ (exact values can be found in~\cite{nubert19}), joint velocity $|\dot{q}|\leq 2.3\frac{\text{rad}}{\text{s}}$, and joint acceleration $|\ddot{q}|\leq 8\frac{\text{rad}}{\text{s}^2}$.

More interestingly, we also impose constraints on the output function $y$ to ensure obstacle avoidance in the Cartesian space. We approximate the obstacles with differentiable functions, compare Figure~\ref{fig:output_constraints}. This allows for a simpler implementation and design.
\begin{figure}[t]
	\centering
	\includegraphics[width=0.97\linewidth]{./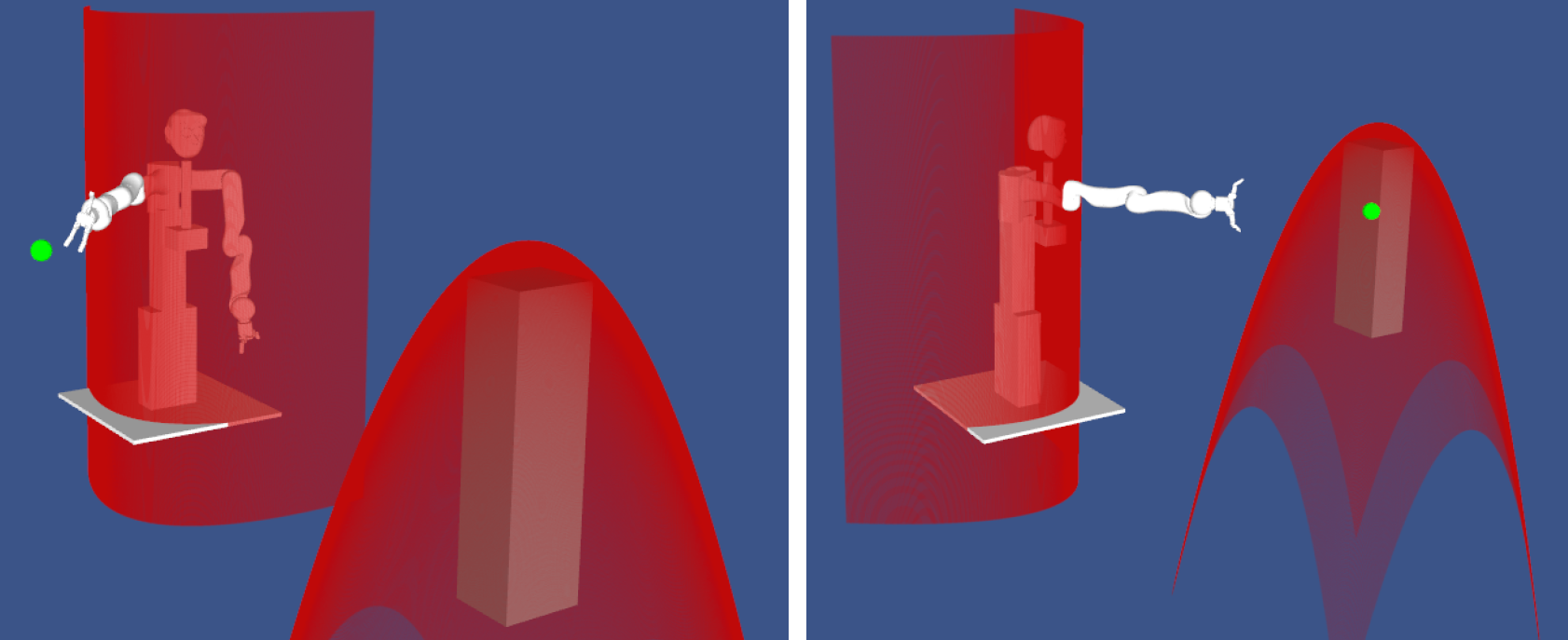}
	\caption{Visualization of the output constraints. We use (quadratic) differentiable functions to over-approximate the non-differentiable obstacles.}
	\label{fig:output_constraints}
\end{figure}
For example,
\begin{equation}
\label{equ:obst_vert_outp_constr}
g_p(x,u) = -(y_1 - y_1^{\mathrm{o}}) -C \left((y_3 - y_3^{\mathrm{o}})^2 + (y_2 - y_2^{\mathrm{o}})^2\right)\leq 0, \nonumber
\end{equation}
models the box-shaped obstacle, with the obstacle position $y^{\mathrm{o}}=(y_1^{\mathrm{o}},y_2^{\mathrm{o}},y_3^{\mathrm{o}})$, the end effector $y=(y_1,y_2,y_3)$, and here $C=2$.
Similarly, we introduce a nonlinear constraint that prevents the robot from hitting itself (see Figure~\ref{fig:output_constraints}).
\begin{remark}
This constraint formulation uses a simple (conservative) over-approximation and assumes static obstacles. 
Both limitations can be addressed by using the exact reformulation in~\cite{Zhang2017} based on duality and using the robust extension in~\cite{Soloperto18} for uncertain moving obstacles.
\end{remark}

\subsection{Robust MPC Design}
In general, the dynamic compensation introduced in the previous subsection is not exact and hence, the resulting model mismatch needs to be addressed in the robust design.

\subsubsection{Determination of Disturbance Level}
\label{sec:dist_det}
For the determination of the disturbance level, we sample trajectories for a specified sampling time and compare the observed trajectory to the nominal prediction for each discrete-time step. The deviation of the two determines the disturbance bound introduced for the discrete-time case, i.e. $d_{\mathrm{w},\mathrm{d}}$. In Figure~\ref{fig:obs_dist}, a plot of the $\infty$-norm of the observed disturbance with respect to the applied acceleration is shown. 
\begin{figure}[t]
	\centering
	\includegraphics[width=0.8\linewidth]{./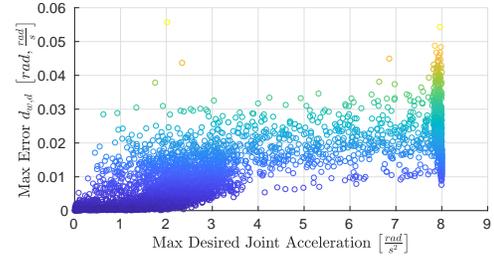}
	\caption{Observed disturbance with respect to the applied acceleration for a sampling rate of $2.5\,\text{Hz}$. Proportionality-like behavior is apparent.}
	\label{fig:obs_dist}
\end{figure}
The maximal observed model mismatch satisfies $\|d_{\mathrm{w},\mathrm{d}}\|_{\infty}\leq 0.06$. 
As a precaution we add some tolerance and use  $\|d_{\mathrm{w},\mathrm{d}}\|_\infty \leq 0.1$ for our design.
From the figure, it can be seen that the induced disturbance can be larger for higher accelerations. This behavior is not surprising, since the low level controllers have more difficulties to follow the reference acceleration for more dynamic movements. 
Using $\overline{w}_{\mathrm{c}}(x,u)=c_0+c_1\|u\|_{\infty}+c_2\|\dot{q}\|_\infty$ instead of a constant bound, could help to further decrease conservatism (compare~\cite{KoehlerCompEff18}). Furthermore, the uncertainty could also be reduced by improving the kinematic model using data, as e.g. done in~\cite{carron2019data} with an additional gaussian process (GP) error model.

\subsubsection{Computations}
\label{sec:rmpc_computations}
The offline computations are done according to Algorithm~\ref{alg:rmpc_off_calc}. We consider a quadratic incremental Lyapunov function $V_{\delta}(x,z)=\|x-z\|_{P_\delta}^2$ and a linear feedback $\kappa(x)=K_{\delta}x$, both computed using tailored LMIs (incorporating~\eqref{eq:V_delta_w_diff},~\eqref{equ:inc_stab_equ_2}), compare~\cite{nubert19}.
The terminal cost $V_{\mathrm{f}}$ is given by the LQR infinite horizon cost.
The online computations from Algorithm~\ref{alg:rmpc_on_calc} are performed in a real-time C++ environment by deploying the CasADi C++ API for solving the involved optimization problem~\cite{Andersson2018}.
The feedback $\kappa(x_t)$ is updated with a rate of $1\,\text{kHz}$ -- hence, it can be considered as being continuous-time for all practical purposes. Furthermore, $\pi_{\text{MPC}}$ is evaluated every $h=400 \,\text{ms}$.

\subsection{Experimental Results RMPC}
With the RMPC design, we demonstrate a reliable and safe way for controlling the end effector position of the robotic manipulator.  
An exemplary trajectory on the real system can be observed in Figure~\ref{fig:robot_experiment}, where the end effector tracks the reference, which is set by the user. 
\begin{figure}[hbt!]
	\begin{subfigure}{\linewidth}
	\centering
	\includegraphics[width=.9\linewidth]{./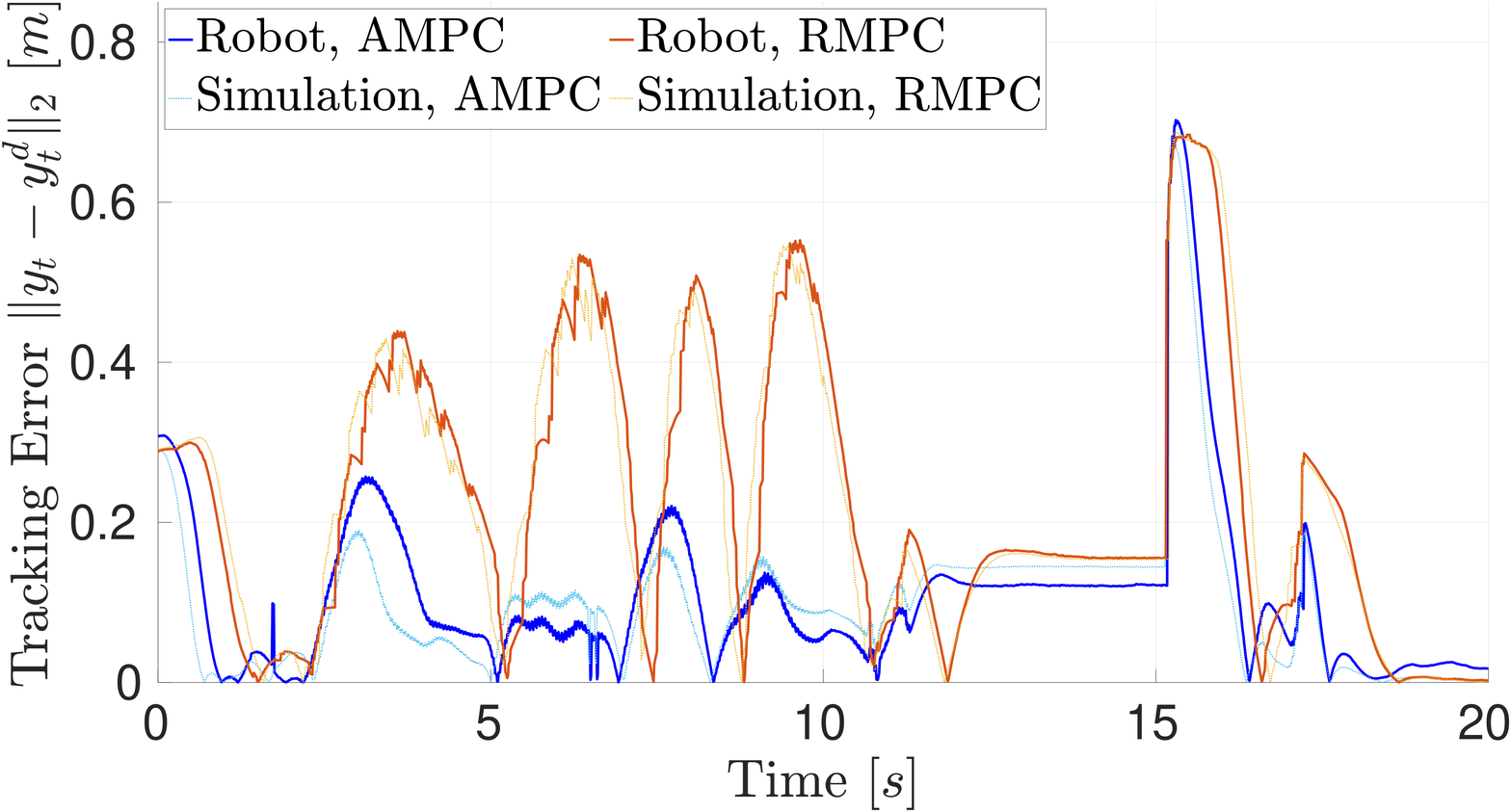}
	\label{fig:track_err_and_cont_inp_a}
	\end{subfigure}
	\begin{subfigure}{\linewidth}
	\centering
	\includegraphics[width=.9\linewidth]{./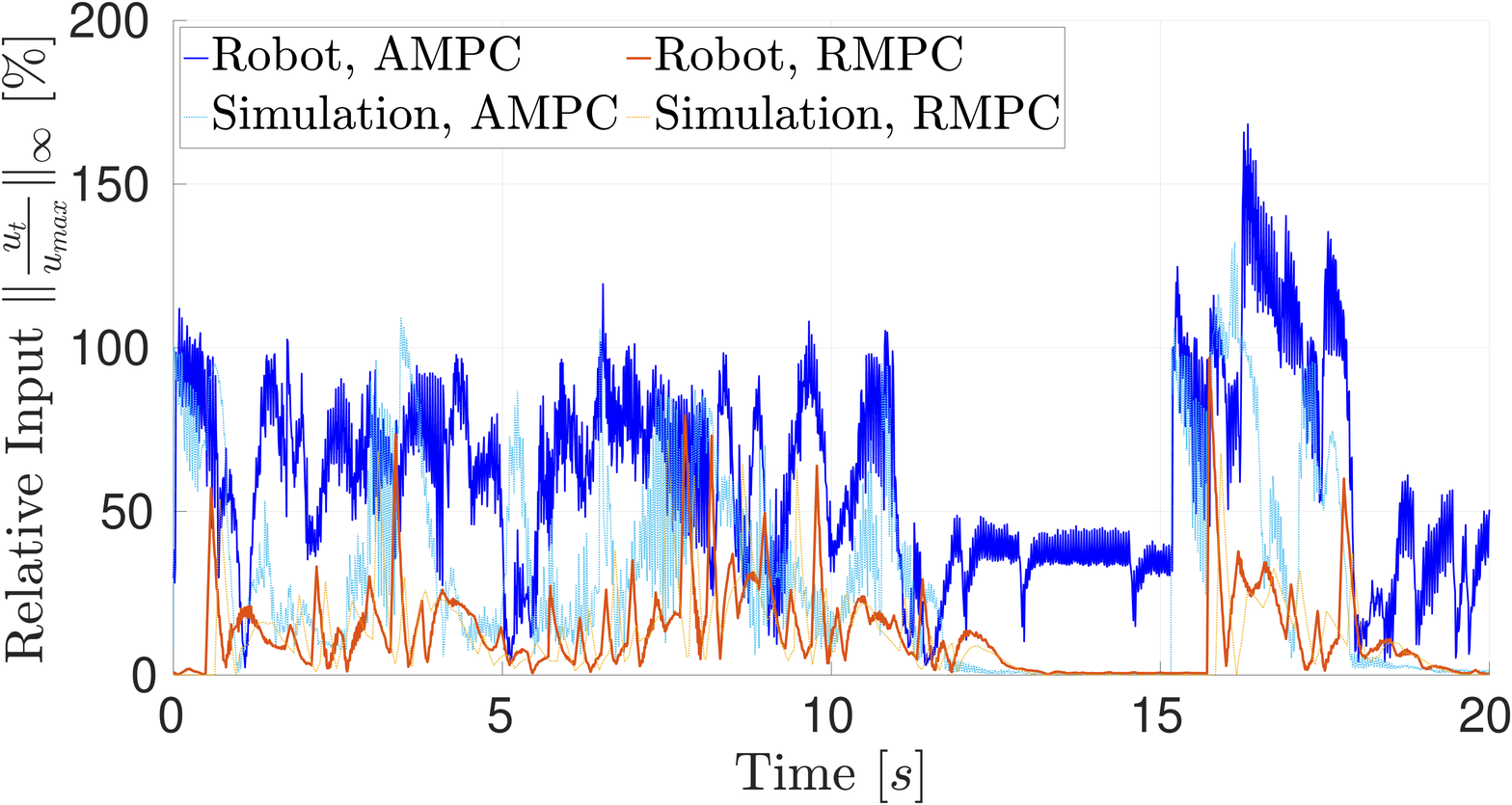}
	\label{fig:track_err_and_cont_inp_b}
	\end{subfigure}
    \caption{
	Experimental (solid) and simulation (dashed) data of RMPC (blue-colored) and AMPC (orange-colored) with the same reference $y_t^{\mathrm{d}}$. Reference $y^\mathrm{d}$ is continuously moving for $t \in [0,11]$s; constant, but unreachable in the interval $t\in[11,15]$s; and moving again after a step for $t \in [15,19]$s; with $u_t=\kappa(x_t)+\pi_{\text{MPC}}(x_{t_k})$ and $u_t=\kappa(x_t)+\pi_{\text{approx}}(x_{t_k})$.
	}
	\label{fig:track_err_and_cont_inp}
\end{figure}
Even though the direct way is obstructed by an obstacle, the controller obtains a solution while keeping safe distance to it. Figure~\ref{fig:track_err_and_cont_inp} shows the tracking error and closed-loop input of the RMPC for an exemplary use-case. The controller is able to track the reference. However, due to the computational complexity and its induced delay, the controller has a larger tracking error in intervals of changing set points (interval $[2,11]$s in Fig.~\ref{fig:track_err_and_cont_inp}).
Note that the constraint tightening of the considered RMPC method only restricts future control actions and thus the scheme can in principle utilize the full input magnitude.
However, due to the  combination of the velocity constraint and the long sampling time $h$, the full input is only utilized by the AMPC, with the faster sampling time.
More experimental results can be observed in the supplementary video\footnote{\url{https://youtu.be/c5EekdSl9To}}.

Integrating the tracking control within a single optimization problem and automatically resolving corner cases such as unreachable setpoints are particular features that make the deployment of the approach simple, safe, and reliable in practice. As expected by the considered robust design, in thousands of runs (one run corresponds to one initial condition and one output reference), the robot never came close to hitting any of the obstacles (e.g. video: $\text{1:15}\,\text{min}$). 
This is the result of using the bound $\overline{w}_{\mathrm{d}}$ on the model mismatch, implying safe but conservative operation.
Furthermore, the controller is able to steer the end effector along interesting trajectories in order to avoid collisions (e.g. video: $\text{1:50}\,\text{min}$). 

\subsection{AMPC Design}
For the robot control, the AMPC is  designed according to Algorithm~3.
For this purpose, we first design an RMPC with a sampling time of $h=40\,\text{ms}$, i.e., ten times faster than the previous RMPC.
To simplify the learning problem, we only consider the self-collision avoidance constraint.
Therefore, the MPC control law $\pi_{\text{MPC}}$ depends on the state $x \in\mathbb{R}^8$ and the desired reference $y^{\mathrm{d}} \in\mathbb{R}^3$, i.e., on $11$ parameters in total. 

To obtain the necessary precision for the AMPC, interesting questions emerged regarding the structure of the used NN, its training procedure and the sampling of the (ground truth) RMPC. Regarding the depth of the network, our observations confirm insights in~\cite{karg18}: deep NNs are better suited to obtain an explicit policy representation.

A tradeoff exists between the higher expressiveness and the slower training of deeper networks. We decided to use a fully connected NN with 20 hidden layers, consecutively shrinking the layers from $1024$ neurons in the first hidden layer to $4$ in the output layer. This results in roughly $5 \cdot 10^6$ trainable parameters in total. All hidden neurons are \emph{ReLu}-activated, whereas the output layer is activated linearly. Other techniques such as batch normalization, regularization, or skip connections did not help to improve the approximation.

The RMPC control law $\pi_{\text{MPC}}(x,y^{\mathrm{d}})$ can become relatively large in magnitude, which makes the regression more difficult. We circumvent this problem by directly learning the applied input $u^*(x,y^{\mathrm{d}})=\pi_{\text{MPC}}(x,y^{\mathrm{d}})+\kappa(x)\in\left[-8\frac{\text{rad}}{\text{s}^2},8\frac{\text{rad}}{\text{s}^2}\right]$. This can be seen as a zero-centered normalization of the reference output, facilitating smaller approximation errors. In addition, $\pi_{\text{approx}}$ can be readily evaluated online, since $\kappa$ is known.

For the training, we use a set of approximately $50 \cdot 10^6$ datapoints which are obtained by offline sampling the RMPC.
Our training corpus consists of a combination of random sampling $\{\left(x^{(j)},y^{\mathrm{d}(j)}\right),\pi_{\text{MPC}}\left(x^{(j)},y^{\mathrm{d}(j)}\right)\}$ and trajectory-based sampling of i.i.d. trajectories $\{\left(x^{j} \in X_i,y^{\mathrm{d},(i)} \right),\pi_{\text{MPC}}\left(x^{(i)},y^{\mathrm{d}(j)}\right) \}$, with trajectory-wise random initial condition $x^{(i)}$ and reference $y^{\mathrm{d}(i)}$.
The former helps the network to get an idea of all areas, whereas the latter one represents the areas of high interest.

Given the AMPC design, we next aim to perform the validation as per Sec.~\ref{sec:Method_AMPC}. 
We execute the validation in simulation, which is deterministic. 
We account for the model mismatch with a separate term during the validation (cf.
Prop.~\ref{prop:ampc_stab}).
We found that for the considered system and controller tasks, performing the validation is demanding. 
Currently, we are able to satisfy criterion~\eqref{equ:ass_within_prop_2}
for approximately $90$\% of all sampled points.  While this is not fully satisfactory for a high-probability guarantees on full trajectories, it is still helpful to understand the quality of the learned controller.
While no failure cases were observed in the experiments reported herein, performing such \emph{a priori} validation for the robot implementation is subject to future work.

\subsection{Experimental Results AMPC}
With the AMPC design, the evaluation time compared to the RMPC is reduced by a factor of $200$ to $1\,\text{ms}$. 
We implemented the AMPC with a sampling time of $40\,\text{ms}$ and due to the short evaluation of less than $1\,\text{ms}$, the control input can be applied immediately for the current sampling interval instead of performing the optimization for the predicted next state. This results in a response time in the interval $[1,40]\,\text{ms}$ for the AMPC instead of $[400,800]\,\text{ms}$ for the RMPC. 
The resulting, more aggressive input can be observed in Figure~\ref{fig:track_err_and_cont_inp}.
Note that the AMPC sometimes violates the input constraints in the shown experiment. 
This is mainly due to a combination of a large control gain in the pre-stabilization $\kappa$ and large measurement noise in the experiment. 
To circumvent this problem, the noise could be considered in the design or a less aggressive feedback $\kappa$ could be used.

We emphasize that the results and achieved performance are significant, considering the $11$ parameters in the nonlinear
MPC, while standard explicit MPC approaches are only applicable to small-medium scale linear problems.


\section{Conclusion}
The approach developed in this letter achieves safe and fast tracking control on complex systems such as modern robots by combining robust MPC and NN control.

The proposed robust MPC ensures safe operation (stability, constraint satisfaction) despite uncertain system descriptions.  What is more, the MPC scheme simplifies complex tracking control tasks to a single design step by joining otherwise often separate planning and control layers: real-time control commands are directly computed for given reference and constraints.
Our experiments on a KUKA LBR4+ arm are the first to demonstrate such robust MPC on a real robotic system. The proposed RMPC thus, provides a complete framework for tracking control of complex robotic tasks.

We tackled the computational complexity of MPC in fast robotics applications by proposing an approximate MPC. This approach replaces the online optimization with the evaluation of a NN, which is trained and validated in an offline fashion on a suitably defined robust MPC. The proposed approach demonstrates significant speed and performance improvements. 
Again, the presented experiments are the first to demonstrate the suitability of such NN-based control on real robots.
Providing \emph{a priori} statistical guarantees for such robot experiments by further improving the learning and validation procedures are relevant topics for future work.


\section{Acknowledgments}
The authors thank A. Marco and F. Solowjow for helpful discussions, and their colleagues at MPI-IS who contributed to the Apollo robot platform.


\bibliographystyle{IEEEtran}
\bibliography{icra_ral_2020}

\begin{thebibliography}{10}
\providecommand{\url}[1]{#1}
\csname url@rmstyle\endcsname
\providecommand{\newblock}{\relax}
\providecommand{\bibinfo}[2]{#2}
\providecommand\BIBentrySTDinterwordspacing{\spaceskip=0pt\relax}
\providecommand\BIBentryALTinterwordstretchfactor{4}
\providecommand\BIBentryALTinterwordspacing{\spaceskip=\fontdimen2\font plus
\BIBentryALTinterwordstretchfactor\fontdimen3\font minus
  \fontdimen4\font\relax}
\providecommand\BIBforeignlanguage[2]{{%
\expandafter\ifx\csname l@#1\endcsname\relax
\typeout{** WARNING: IEEEtran.bst: No hyphenation pattern has been}%
\typeout{** loaded for the language `#1'. Using the pattern for}%
\typeout{** the default language instead.}%
\else
\language=\csname l@#1\endcsname
\fi
#2}}

\bibitem{Siciliano2008}
B.~Siciliano, L.~Sciavicco, L.~Villani, and G.~Oriolo, \emph{Robotics:
  Modelling, Planning and Control}, 1st~ed.\hskip 1em plus 0.5em minus
  0.4em\relax Springer, 2008.

\bibitem{2017_rss_system}
D.~Kappler, F.~Meier, J.~Issac, J.~Mainprice, C.~Garcia~Cifuentes,
  M.~W{\"u}thrich, V.~Berenz, S.~Schaal, N.~Ratliff, and J.~Bohg, ``Real-time
  perception meets reactive motion generation,'' \emph{IEEE Robotics and
  Automation Letters}, vol.~3, no.~3, pp. 1864--1871, 2018.

\bibitem{rawlings2009model}
J.~B. Rawlings and D.~Q. Mayne, \emph{Model predictive control: Theory and
  design}.\hskip 1em plus 0.5em minus 0.4em\relax Nob Hill Pub., 2009.

\bibitem{villanueva2017robust}
M.~E. Villanueva, R.~Quirynen, M.~Diehl, B.~Chachuat, and B.~Houska, ``Robust
  {MPC} via min--max differential inequalities,'' \emph{Automatica}, vol.~77,
  pp. 311--321, 2017.

\bibitem{KoehlerCompEff18}
J.~K\"ohler, R.~Soloperto, M.~A. M\"uller, and F.~Allg\"ower, ``A
  computationally efficient robust model predictive control framework for
  uncertain nonlinear systems,'' \emph{arXiv preprint arXiv:1910.12081}, 2019.

\bibitem{LIMON18}
D.~{Limon}, A.~{Ferramosca}, I.~{Alvarado}, and T.~{Alamo}, ``Nonlinear {MPC}
  for tracking piece-wise constant reference signals,'' \emph{IEEE Trans.
  Autom. Control}, vol.~63, no.~11, pp. 3735--3750, Nov 2018.

\bibitem{limon2010robust}
D.~Limon, I.~Alvarado, T.~Alamo, and E.~Camacho, ``Robust tube-based {MPC} for
  tracking of constrained linear systems with additive disturbances,''
  \emph{J.\ Proc.\ Contr.}, vol.~20, no.~3, pp. 248--260, 2010.

\bibitem{kohler19nonlinear}
J.~K{\"o}hler, M.~A. M{\"u}ller, and F.~Allg{\"o}wer, ``A nonlinear tracking
  model predictive control scheme for dynamic target signals,'' \emph{arXiv
  preprint arXiv:1911.03304}, 2019.

\bibitem{DeepMPCChen18}
S.~{Chen}, K.~{Saulnier}, N.~{Atanasov}, D.~D. {Lee}, V.~{Kumar}, G.~J.
  {Pappas}, and M.~{Morari}, ``Approximating explicit model predictive control
  using constrained neural networks,'' in \emph{Proc.\ American Control Conf.
  (ACC)}, 2018, pp. 1520--1527.

\bibitem{zhang2019near}
X.~Zhang, M.~Bujarbaruah, and F.~Borrelli, ``Near-optimal rapid {MPC} using
  neural networks: A primal-dual policy learning framework,'' \emph{arXiv
  preprint arXiv:1912.04744}, 2019.

\bibitem{karg18}
B.~Karg and S.~Lucia, ``Efficient representation and approximation of model
  predictive control laws via deep learning,'' \emph{arXiv preprint
  arXiv:1806.10644}, 2018.

\bibitem{chen2019large}
S.~W. Chen, T.~Wang, N.~Atanasov, V.~Kumar, and M.~Morari, ``Large scale model
  predictive control with neural networks and primal active sets,'' \emph{arXiv
  preprint arXiv:1910.10835}, 2019.

\bibitem{hertneck18}
M.~{Hertneck}, J.~{K\"ohler}, S.~{Trimpe}, and F.~{Allg\"ower}, ``Learning an
  approximate model predictive controller with guarantees,'' \emph{IEEE Control
  Systems Letters}, vol.~2, no.~3, pp. 543--548, July 2018.

\bibitem{Faulwasser17}
T.~{Faulwasser}, T.~{Weber}, P.~{Zometa}, and R.~{Findeisen}, ``Implementation
  of nonlinear model predictive path-following control for an industrial
  robot,'' \emph{IEEE Trans. on Control Systems Technology}, vol.~25, no.~4,
  pp. 1505--1511, July 2017.

\bibitem{carron2019data}
A.~Carron, E.~Arcari, M.~Wermelinger, L.~Hewing, M.~Hutter, and M.~N.
  Zeilinger, ``Data-driven model predictive control for trajectory tracking
  with a robotic arm,'' \emph{IEEE Robotics and Automation Letters}, vol.~4,
  no.~4, pp. 3758--3765, 2019.

\bibitem{PredInvKin19}
M.~{Faroni}, M.~{Beschi}, N.~{Pedrocchi}, and A.~{Visioli}, ``Predictive
  inverse kinematics for redundant manipulators with task scaling and kinematic
  constraints,'' \emph{IEEE Transactions on Robotics}, vol.~35, no.~1, pp.
  278--285, Feb 2019.

\bibitem{majumdar2017funnel}
A.~Majumdar and R.~Tedrake, ``Funnel libraries for real-time robust feedback
  motion planning,'' \emph{The International Journal of Robotics Research},
  vol.~36, no.~8, pp. 947--982, 2017.

\bibitem{singh2017robust}
S.~Singh, A.~Majumdar, J.-J. Slotine, and M.~Pavone, ``Robust online motion
  planning via contraction theory and convex optimization,'' in \emph{Proc.
  IEEE International Conf. on Robotics and Automation (ICRA)}.\hskip 1em plus
  0.5em minus 0.4em\relax IEEE, 2017, pp. 5883--5890.

\bibitem{fridovich2018planning}
D.~Fridovich-Keil, S.~L. Herbert, J.~F. Fisac, S.~Deglurkar, and C.~J. Tomlin,
  ``Planning, fast and slow: A framework for adaptive real-time safe trajectory
  planning,'' in \emph{Proc. IEEE International Conf. on Robotics and
  Automation (ICRA)}, 2018, pp. 387--394.

\bibitem{KoehlerQINF18}
J.~K\"{o}hler, M.~A. M{\"u}ller, and F.~Allg{\"o}wer, ``A nonlinear model
  predictive control framework using reference generic terminal ingredients,''
  \emph{IEEE Trans. Autom. Control}, 2019, accepted.

\bibitem{hoeffding63}
W.~Hoeffding, ``Probability inequalities for sums of bounded random
  variables,'' \emph{Journal of the American Statistical Association}, vol.~58,
  no. 301, pp. 13--30, 1963.

\bibitem{Jubien14}
A.~Jubien, M.~Gautier, and A.~Janot, ``Dynamic identification of the {K}uka
  {LWR} robot using motor torques and joint torque sensors data,'' in
  \emph{Proc.\ 19th IFAC World Congress}, 2014, pp. 8391 -- 8396.

\bibitem{Sturz17}
Y.~Stuerz, L.~Affolter, and R.~Smith, ``Parameter identification of the {K}uka
  {LBR} iiwa robot including constraints on physical feasibility,'' in
  \emph{Proc.\ 20th IFAC World Congress}, 2017, pp. 6863 -- 6868.

\bibitem{nubert19}
J.~Nubert, ``{Learning-based Approximate Model Predictive Control With
  Guarantees - Joining Neural Networks with Recent Robust MPC},'' Master's
  thesis, ETH Z\"urich, 8092, Z\"urich, 2019,
  \url{https://www.research-collection.ethz.ch/handle/20.500.11850/385654}.

\bibitem{Zhang2017}
X.~Zhang, A.~Liniger, and F.~Borrelli, ``Optimization-based collision
  avoidance,'' \emph{arXiv preprint arXiv:1711.03449}, 2017.

\bibitem{Soloperto18}
R.~Soloperto, J.~K\"ohler, M.~A. M\"uller, and F.~Allg\"ower, ``Collision
  avoidance for uncertain nonlinear systems with moving obstacles using robust
  model predictive control,'' in \emph{Proc.\ European Control Conf. (ECC)},
  2019, pp. 811--817.

\bibitem{Andersson2018}
J.~A.~E. Andersson, J.~Gillis, G.~Horn, J.~B. Rawlings, and M.~Diehl,
  ``{CasADi}: {A} software framework for nonlinear optimization and optimal
  control,'' \emph{Mathematical Programming Computation}, vol.~11, 2019.

\end{thebibliography}


\end{document}